\theoremstyle{remark}
\theoremstyle{definition}
\newtheorem{definition}{Definition}[section]
\declaretheorem[numberwithin=section]{thm}
\declaretheorem[sibling=thm]{assumption}
\DeclareRobustCommand{\eg}{e.g.,\@\xspace}
\DeclareRobustCommand{\ie}{i.e.,\@\xspace}
\DeclareRobustCommand{\wrt}{w.r.t.\@\xspace}
\pgfplotsset{compat=newest}
\definecolor{citrine}{rgb}{0.89, 0.82, 0.04}
\definecolor{blued}{RGB}{70,197,221}
\newcommand{\realspace}{\mathbb R}      
\newcommand{\transpose}[1]{{#1}^\texttt{T}}
\DeclareMathOperator*{\argmax}{arg\,max}
\DeclareMathOperator*{\argmin}{arg\,min}
\DeclareMathOperator*{\EV}{\mathbb{E}}
\DeclareMathOperator{\Tr}{Tr}
\DeclareMathOperator*{\Cov}{\mathbb{C}ov}
\DeclareMathOperator*{\Var}{\mathbb{V}ar}
\newcommand{\EVV}[2][\ppvect \in \ppspace]{\EV_{#1}\left[{#2}\right]}
\newcommand{\norm}[2][\infty]{\left\|#2\right\|_{#1}}
\newcommand{\Dij}[2]{\frac{\partial^{2}{#1}}{\partial{#2}_i\partial{#2}_j}}
\newcommand{\de}{\,\mathrm{d}}
\newcommand{\dotprod}[2]{\left\langle#1,#2\right\rangle}
\newcommand{\dnabla}{\nabla\!\!\!\!\nabla}
\newcommand{\vtheta}{\boldsymbol{\theta}}
\newcommand{\Aspace}{\mathcal{A}}
\newcommand{\Sspace}{\mathcal{S}}
\newcommand{\Tspace}{\mathcal{T}}
\newcommand{\Transition}{\mathcal{P}}
\newcommand{\Reward}{\mathcal{R}}
\newcommand{\pol}{\pi_{\vtheta}}
\newcommand{\score}[2]{\nabla\log p_{#1}(#2)}
\newcommand{\vTheta}{\boldsymbol{\Theta}}
\newcommand{\vphi}{\boldsymbol{\phi}}
\newcommand{\gradJ}[1]{\nabla J(#1)}
\newcommand{\gradApp}[2]{\widehat{\nabla}_{#2}J(#1)}
\newcommand{\Ets}[2][t]{\mathbb{E}_{#1\vert s}\left[#2\right]}
\newcommand{\Es}[1]{\mathbb{E}_{s}\left[#1\right]}
\newcommand{\Covts}[3][t]{{\mathbb{C}\text{ov}}_{#1\vert s}\left(#2,#3\right)}
\newcommand{\Covs}[2]{{\mathbb{C}\text{ov}}_{s}\left(#1,#2\right)}
\newcommand{\Varts}[2][t]{{\mathbb{V}\text{ar}}_{#1\vert s}\left[#2\right]}
\newcommand{\Vars}[1]{{\mathbb{V}\text{ar}}_{s}\left[#1\right]}
\newcommand{\gradBlack}[1]{\blacktriangledown J(#1)}
\newcommand{\gradIdeal}[1]{\dnabla J(#1)}
\newcommand{\VARRF}{V}
\newcommand{\GRADLOG}{G}
\newcommand{\VARIS}{W}
\newcommand{\HESSLOG}{F}
\newcommand{\wt}[1]{\widetilde{#1}}
\newcommand{\wh}[1]{\widehat{#1}}
\g@addto@macro\normalsize{%
}
\icmltitlerunning{Stochastic Variance-Reduced Policy Gradient}
\begin{document}

\twocolumn[
\icmltitle{Stochastic Variance-Reduced Policy Gradient}



\icmlsetsymbol{equal}{*}

\begin{icmlauthorlist}
\icmlauthor{Matteo Papini}{equal,polimi}
\icmlauthor{Damiano Binaghi}{equal,polimi}
\icmlauthor{Giuseppe Canonaco}{equal,polimi}
\icmlauthor{Matteo Pirotta}{inria}
\icmlauthor{Marcello Restelli}{polimi}
\end{icmlauthorlist}

\icmlaffiliation{polimi}{Politecnico di Milano, Milano, Italy}
\icmlaffiliation{inria}{Inria, Lille, France}

\icmlcorrespondingauthor{Matteo Papini}{matteo.papini@polimi.it}

\icmlkeywords{Machine Learning, ICML}

\vskip 0.3in
]



\printAffiliationsAndNotice{\icmlEqualContribution} 

\begin{abstract}
        In this paper, we propose a novel reinforcement-learning algorithm consisting in a stochastic variance-reduced version of policy gradient for solving Markov Decision Processes (MDPs).
        Stochastic variance-reduced gradient (SVRG) methods have proven to be very successful in supervised learning.
        However, their adaptation to policy gradient is not straightforward and needs to account for I) a non-concave objective function; II) approximations in the full gradient computation; and III) a non-stationary sampling process.
        The result is SVRPG, a stochastic variance-reduced policy gradient algorithm that leverages on importance weights to preserve the unbiasedness of the gradient estimate.
        Under standard assumptions on the MDP, we provide convergence guarantees for SVRPG with a convergence rate that is linear under increasing batch sizes.
        Finally, we suggest practical variants of SVRPG, and we empirically evaluate them on continuous MDPs.
\end{abstract}

\vspace{-0.05in}
\section{Introduction}
\vspace{-0.05in}
On a very general level, artificial intelligence addresses the problem of an agent that must select the right actions to solve a task. The approach of Reinforcement Learning (RL)~\citep{sutton1998reinforcement} is to learn the best actions by direct interaction with the environment and evaluation of the performance in the form of a reward signal. This makes RL fundamentally different from Supervised Learning (SL), where correct actions are explicitly prescribed by a human teacher (\eg for classification, in the form of class labels). However, the two approaches share many challenges and tools. The problem of estimating a model from samples, which is at the core of SL, is equally fundamental in RL, whether we choose to model the environment, a value function, or directly a policy defining the agent's behaviour. Furthermore, when the tasks are characterized by large or continuous state-action spaces, RL needs the powerful function approximators (\eg neural networks) that are the main subject of study of SL.
In a typical SL setting, a performance function $J(\vtheta)$ has to be optimized \wrt to model parameters $\vtheta$. The set of data that are available for training is often a subset of all the cases of interest, which may even be infinite, leading to optimization of finite sums that approximate the expected performance over an unknown data distribution. When generalization to the complete dataset is not taken into consideration, we talk about Empirical Risk Minimization (ERM). Even in this case, stochastic optimization is often used for reasons of efficiency. The idea of stochastic gradient (SG) ascent \cite{nesterov2013introductory} is to iteratively focus on a random subset of the available data to obtain an approximate improvement direction. At the level of the single iteration, this can be much less expensive than taking into account all the data. However, the sub-sampling of data is a source of variance that can potentially compromise convergence, so that per-iteration efficiency and convergence rate must be traded off with proper handling of meta-parameters.
Variance-reduced gradient algorithms such as SAG \cite{roux2012stochastic}, SVRG \cite{johnson2013accelerating} and SAGA \cite{defazio2014saga} offer better ways of solving this trade-off, with significant results both in theory and practice. Although designed explicitly for ERM, these algorithms address a problem that affects more general machine learning problems. 

In RL, stochastic optimization is rarely a matter of choice, since data must be actively sampled by interacting with an initially unknown environment. In this scenario, limiting the variance of the estimates is a necessity that cannot be avoided, which makes variance-reduced algorithms very interesting.
Among RL approaches, policy gradient \cite{sutton2000policy} is the one that bears the closest similarity to SL solutions. The fundamental principle of these methods is to optimize a parametric policy through stochastic gradient ascent. Compared to other applications of SG, the cost of collecting samples can be very high since it requires to interact with the environment. This makes SVRG-like methods potentially much more efficient than, \eg batch learning.
Unfortunately, RL has a series of difficulties that are not present in ERM. First, in SL the objective can often be designed to be strongly concave (we aim to maximize). This is not the case for RL, so we have to deal with non-concave objective functions. Then, as mentioned before, the dataset is not initially available and may even be infinite, which makes approximations unavoidable. This rules out SAG and SAGA because of their storage requirements, which leaves SVRG as the most promising choice. Finally, the distribution used to sample data is not under direct control of the algorithm designer, but it is a function of policy parameters that change over time as the policy is optimized, which is a form of non-stationarity.
SVRG has been used in RL as an efficient technique for optimizing the per-iteration problem in Trust-Region Policy Optimization~\citep{xu2017svrgtrpo} or for policy evaluation~\citep{du2017svrgpe}.
In both the cases, the optimization problems faced resemble the SL scenario and are not affected by all the previously mentioned issues.

After providing background on policy gradient and SVRG in Section \ref{sec:pre}, we propose SVRPG, a variant of SVRG for the policy gradient framework, addressing all the difficulties mentioned above (see Section \ref{sec:alg}). In Section \ref{sec:conv} we provide convergence guarantees for our algorithm, and we show a convergence rate that has an $O(\nicefrac{1}{T})$ dependence on the number $T$ of iterations. In Section \ref{sec:stopping} we suggest how to set the meta-parameters of SVRPG, while in Section \ref{sec:prac} we discuss some practical variants of the algorithm. Finally, in Section \ref{sec:exp} we empirically evaluate the performance of our method on popular continuous RL tasks.

\vspace{-0.05in}
\section{Preliminaries}\label{sec:pre}
\vspace{-0.05in}
In this section, we provide the essential background on policy gradient methods and stochastic variance-reduced gradient methods for finite-sum optimization.

\vspace{-0.05in}
\subsection{Policy Gradient}\label{subsec:PolicyGradient}
\vspace{-0.05in}
A Reinforcement Learning task~\citep{sutton1998reinforcement} can be modelled with a discrete-time continuous Markov Decision Process (MDP) $M = \{\Sspace,\Aspace,\Transition,\Reward,\gamma,\rho\}$, where $\Sspace$ is a continuous state space; $\Aspace$ is a continuous action space; $\Transition$ is a Markovian transition model, where $\Transition(s'|s,a)$ defines the transition density from state $s$ to $s'$ under action $a$; $\Reward$ is the reward function, where $\Reward(s,a) \in [-R,R]$ is the expected reward for state-action pair $(s,a)$;
$\gamma\in[0,1)$ is the discount factor; and $\rho$ is the initial state distribution.
The agent's behaviour is modelled as a policy $\pi$, where $\pi(\cdot|s)$ is the density distribution over $\Aspace$ in state $s$.
We consider episodic MDPs with effective horizon $H$.\footnote{The episode duration is a random variable, but the optimal policy can reach the target state (\ie absorbing state) in less than $H$ steps. This has not to be confused with a finite horizon problem where the optimal policy is non-stationary.} In this setting, we can limit our attention to trajectories of length $H$. A trajectory $\tau$ is a sequence of states and actions $(s_0,a_0,s_1,a_1,\dots,s_{H-1},a_{H-1})$ observed by following a stationary policy, where $s_0 \sim \rho$.
We denote with $p(\tau|\pi)$ the density distribution induced by policy $\pi$ on the set $\Tspace$ of all possible trajectories (see Appendix~\ref{A:gradient_estimators} for the definition), and with $\Reward(\tau)$ the total discounted reward provided by trajectory $\tau$:
$\Reward(\tau) = \sum_{t=0}^{H-1}\gamma^t\Reward(s_t,a_t).$
Policies can be ranked based on their expected total reward: $J(\pi) = \EVV[\tau \sim p(\cdot|\pi)]{\Reward(\tau)|M}$.
Solving an MDP $M$ means finding $\pi^* \in \argmax_{\pi} \{J(\pi)\}$.

Policy gradient methods restrict the search for the best performing policy over a class of parametrized policies $\Pi_{\vtheta}=\{\pol: \vtheta \in \realspace^d\}$, with the only constraint that $\pol$ is differentiable \wrt $\vtheta$. For sake of brevity, we will denote the performance of a parametric policy with $J(\vtheta)$ and the probability of a trajectory $\tau$ with $p(\tau|\vtheta)$ (in some occasions, $p(\tau|\vtheta)$ will be replaced by $p_{\vtheta}(\tau)$ for the sake of readability).
The search for a locally optimal policy is performed through gradient ascent, where the policy gradient
is \cite{sutton2000policy, Peters2008reinf}:
\begin{align} \label{E:policygradient}
        \gradJ{\vtheta} = \EVV[\tau \sim p(\cdot|\vtheta)]{\score{\vtheta}{\tau}\Reward(\tau)}.
\end{align}
Notice that the distribution defining the gradient is induced by the current policy. This aspect introduces a nonstationarity in the sampling process. Since the underlying distribution changes over time, it is necessary to resample at each update or use weighting techniques such as importance sampling.
Here, we consider the \emph{online learning scenario}, where trajectories are sampled by interacting with the environment at each policy change. 
In this setting, stochastic gradient ascent is typically employed.
At each iteration $k >0$, a batch $\mathcal{D}_N^k = \{\tau_i\}_{i=0}^N$ of $N>0$ trajectories is collected using policy $\pi_{\vtheta_k}$.
The policy is then updated as $\vtheta_{k+1}  = \vtheta_k + \alpha\gradApp{\vtheta_k}{N}$, where $\alpha$ is a step size and $\gradApp{\vtheta}{N}$ is an estimate of Eq.~\eqref{E:policygradient} using $\mathcal{D}_N^k$. The most common policy gradient estimators (\eg REINFORCE~\citep{williams1992simple} and G(PO)MDP~\citep{baxter2001infinite}) can be expressed as follows
\begin{align} \label{E:policygradient.estimate}
        \gradApp{\vtheta}{N} = \frac{1}{N}\sum_{n=1}^{N} g(\tau_i|\vtheta), \quad \tau_i \in \mathcal{D}_N^k,
\end{align}
where $g(\tau_i|\vtheta)$ is an estimate of $\score{\vtheta}{\tau_i}\Reward(\tau_i)$.
Although the REINFORCE definition is simpler than the G(PO)MDP one, the latter is usually preferred due to its lower variance.
We refer the reader to Appendix~\ref{A:gradient_estimators} for details and a formal definition of $g$.

The main limitation of plain policy gradient is the high variance of these estimators.
The na\"ive approach of increasing the batch size is not an option in RL due to the high cost of collecting samples, \ie by interacting with the environment.
For this reason, literature has focused on the introduction of baselines (\ie functions $b : \mathcal{S} \times \mathcal{A} \to \realspace$) aiming to reduce the variance~\citep[\eg][]{williams1992simple,Peters2008reinf,Thomas2017actionbaseline,wu2018variance}, see Appendix~\ref{A:gradient_estimators} for a formal definition of $b$.
These baselines are usually designed to minimize the variance of the gradient estimate, but even them need to be estimated from data, partially reducing their effectiveness.
On the other hand, there has been a surge of recent interest in variance reduction techniques for gradient optimization in supervised learning (SL).
Although these techniques have been mainly derived for finite-sum problems, we will show in Section~\ref{sec:alg} how they can be used in RL.
In particular, we will show that the proposed SVRPG algorithm can take the best of both worlds (\ie SL and RL) since it can be plugged into a policy gradient estimate using baselines.
The next section has the aim to describe variance reduction techniques for finite-sum problems. In particular, we will present the SVRG algorithm that is at the core of this work.

\vspace{-0.05in}
\subsection{Stochastic Variance-Reduced Gradient}\label{sec:svrg}
\vspace{-0.05in}
Finite-sum optimization is the problem of maximizing an objective function $f(\vtheta)$ which can be decomposed into the sum or average of a finite number of functions $z_i(\vtheta)$:
\begin{align*}
        \max_{\vtheta} \left\{ f(\vtheta) = \frac{1}{N}\sum_{i=1}^{N}z_i(\vtheta)\right\}.
\end{align*}
This kind of optimization is very common in machine learning, where each $z_i$ may correspond to a data sample $x_i$ from a dataset $\mathcal{D}_N$ of size $N$ (\ie $z_i(\vtheta) = z(x_i|\vtheta)$). 
A common requirement is that $z$ must be smooth and concave in $\vtheta$.\footnote{Note that we are considering a maximization problem instead of the classical minimization one.} 
Under this hypothesis, full gradient (FG) ascent~\citep{cauchy1847methode} with a constant step size achieves a linear convergence rate in the number $T$ of iterations (\ie parameter updates)~\citep{nesterov2013introductory}.
However, each iteration requires $N$ gradient computations, which can be too expensive for large values of $N$. Stochastic Gradient (SG) ascent~\citep[\eg][]{robbins1951stochastic,bottou2004large} overcomes this problem by sampling a single sample $x_i$ per iteration, but a vanishing step size is required to control the variance introduced by sampling. As a consequence, the lower per-iteration cost is paid with a worse, sub-linear convergence rate \cite{nemirovskii1983problem}.
Starting from SAG, a series of variations to SG have been proposed to achieve a better trade-off between convergence speed and cost per iteration: \eg SAG~\citep{roux2012stochastic}, SVRG~\cite{johnson2013accelerating}, SAGA~\cite{defazio2014saga}, Finito~\cite{defazio2014finito}, and MISO~\cite{mairal2015incremental}. 
The common idea is to reuse past gradient computations to reduce the variance of the current estimate.
In particular, Stochastic Variance-Reduced Gradient (SVRG) is often preferred to other similar methods for its limited storage requirements, which is a significant advantage when deep and/or wide neural networks are employed.  

The idea of SVRG (Algorithm~\ref{alg:svrg}) is to alternate full and stochastic gradient updates. 
Each $m = O(N)$ iterations, a snapshot $\widetilde{\vtheta}$ of the current parameter is saved together with its full gradient $\nabla f(\widetilde{\vtheta}) = \frac{1}{N} \sum_i \nabla z(x_i|\widetilde{\vtheta})$.
Between snapshots, the parameter is updated with $\blacktriangledown f(\vtheta)$, a gradient estimate corrected using stochastic gradient. For any $t \in \{0,\ldots,m-1\}$:
\begin{equation}\label{E:svrg.gradient.correction}
        \blacktriangledown f(\vtheta_{t}) := v_t = \nabla f(\wt{\vtheta}) + \nabla z(x | \vtheta_t) - \nabla z(x | \wt{\vtheta}),
\end{equation} 
where $x$ is sampled uniformly at random from $\mathcal{D}_N$ (\ie $x \sim \mathcal{U}(\mathcal{D}_N)$).
Note that $t=0$ corresponds to a FG step (\ie $\blacktriangledown f(\vtheta_0) = \nabla f(\wt{\vtheta})$) since $\vtheta_0 := \wt{\vtheta}$.
The corrected gradient $\blacktriangledown f(\vtheta)$ is an unbiased estimate of $\nabla f(\vtheta)$, and it is able to control the variance introduced by sampling even with a fixed step size, achieving a linear convergence rate without resorting to a plain full gradient.

More recently, some extensions of variance reduction algorithms to the non-concave objectives have been proposed~\citep[\eg][]{allen2016variance,reddi2016stochastic,reddi2016fast}. In this scenario, $f$ is typically required to be $L$-smooth, \ie $\norm[2]{\nabla f(\vtheta') - \nabla f(\vtheta)} \leq L\norm[2]{\vtheta'-\vtheta}$ for each $\vtheta,\vtheta'\in\realspace^n$ and for some Lipschitz constant $L$. Under this hypothesis, the convergence rate of SG is $O(\nicefrac{1}{\sqrt{T}})$ \cite{ghadimi2013stochastic}, \ie $T=O(\nicefrac{1}{\epsilon^2})$ iterations are required to get $\norm[2]{\nabla f(\vtheta)}^2\leq\epsilon$. Again, SVRG achieves the same rate as FG \cite{reddi2016stochastic}, which is $O(\frac{1}{T})$ in this case \cite{nesterov2013introductory}. The only additional requirement is to select $\vtheta^*$ uniformly at random among all the $\vtheta_k$ instead of simply setting it to the final value ($k$ being the iterations).

\begin{algorithm}[tb]
	\caption{SVRG}
	\label{alg:svrg}
	\begin{algorithmic}
            \STATE {\bfseries Input:} a dataset $\mathcal{D}_N$, number of epochs $S$, epoch size $m$, step size $\alpha$, initial parameter $\vtheta_{m}^0:=\wt{\vtheta}^0$
		\FOR{$s=0$ {\bfseries to} $S-1$}
        \STATE $\vtheta_0^{s+1} := \wt{\vtheta}^{s} = \vtheta_{m}^s$
        \STATE $\wt{\mu} = \nabla f(\wt{\vtheta}^s)$
		\FOR{$t=0$ {\bfseries to} $m-1$}
        \STATE $x \sim \mathcal{U}\left(\mathcal{D}_N\right)$
		\STATE $v^{s+1}_t = 
			\wt{\mu} + 
			\nabla z(x|\vtheta_t^{s+1}) -
			\nabla z(x|\wt{\vtheta}^{s})
		$
        \STATE $\vtheta_{t+1}^{s+1} = \vtheta_t^{s+1} + \alpha v^{s+1}_t$
		\ENDFOR
		\ENDFOR
        \STATE \underline{Concave case:} \textbf{return} $\vtheta_{m}^S$
        \STATE \underline{Non-Concave case:} \textbf{return} $\vtheta_t^{s+1}$ with $(s,t)$ picked uniformly at random from $\{[0,S-1]\times[0,m-1]\}$
	\end{algorithmic}
\end{algorithm}

\vspace{-0.05in}
\section{SVRG in Reinforcement Learning}\label{sec:alg}
\vspace{-0.05in}
In online RL problems, the usual approach is to tune the batch size of SG to find the optimal trade-off between variance and speed.
Recall that, compared to SL, the samples are not fixed in advance but we need to collect them at each policy change.
Since this operation may be costly, we would like to minimize the number of interactions with the environment.
For these reasons, we would like to apply SVRG to RL problems in order to limit the variance introduced by sampling trajectories, which would ultimately lead to faster convergence.
However, a direct application of SVRG to RL is not possible due to the following issues:
\begin{description}
        \item[Non-concavity:] the objective function $J(\vtheta)$ is typically non-concave.
        \item[Infinite dataset:] the RL optimization cannot be expressed as a finite-sum problem. The objective function is an expected value over the trajectory density $p_{\vtheta}(\tau)$ of the total discounted reward, for which we would need an infinite dataset.
        \item[Non-stationarity:] the distribution of the samples changes over time. In particular, the value of the policy parameter $\vtheta$ influences the sampling process.
\end{description}
To deal with non-concavity, we require $J(\vtheta)$ to be $L$-smooth, which is a reasonable assumption for common policy classes such as Gaussian\footnote{See Appendix~\ref{app:gauss} for more details on the Gaussian policy case.} and softmax~\citep[\eg][]{Furmston2012unifying,pirotta2015lipschitz}.
Because of the infinite dataset, we can only rely on an estimate of the full gradient.
\citet{harikandeh2015stopwasting} analysed this scenario under the assumptions of $z$ being concave, showing that SVRG is robust to an inexact computation of the full gradient. In particular, it is still possible to recover the original convergence rate if the error decreases at an appropriate rate. \citet{bietti2017stochastic} performed a similar analysis on MISO. In Section \ref{sec:conv} we will show how the estimation accuracy impacts on the convergence results with a non-concave objective.
Finally, the non-stationarity of the optimization problem introduces a bias into the SVRG estimator in Eq.~\eqref{E:svrg.gradient.correction}.
To overcome this limitation we employ importance weighting~\citep[\eg][]{rubinstein1981simulation,precup2000eligibility} to correct the distribution shift.

\begin{algorithm}[tb]
	\caption{SVRPG}
	\label{alg:svrpg}
	\begin{algorithmic}
		\STATE {\bfseries Input:} number of epochs $S$, epoch size $m$, step size $\alpha$, batch size $N$, mini-batch size $B$, gradient estimator $g$, initial parameter $\vtheta_{m}^0 \coloneqq \wt{\vtheta}^0 \coloneqq \vtheta_0$
		\FOR{$s=0$ {\bfseries to} $S-1$}
		\STATE $\vtheta_0^{s+1} := \wt{\vtheta}^{s} = \vtheta_{m}^s$
		\STATE Sample $N$ trajectories $\{\tau_j\}$ from $p(\cdot\vert\wt{\vtheta}^{s})$
		\STATE $ \wt{\mu} = \gradApp{\wt{\vtheta}^{s}}{N}$ (see Eq.~\eqref{E:policygradient.estimate})
		\FOR{$t=0$ {\bfseries to} $m-1$}
		\STATE Sample $B$ trajectories $\{\tau_i\}$ from $p(\cdot\vert\vtheta_t^{s+1})$
		\STATE $c^{s+1}_t = \frac{1}{B} \sum\limits_{i=0}^{B-1}
		\begin{aligned}[t]
		\Big( & g(\tau_i|\vtheta_t^{s+1})\\ 
		& \quad{} - \omega(\tau_i|\vtheta^{s+1}_t, \wt{\vtheta}^s) g(\tau_i| \wt{\vtheta}^s) \Big)
		\end{aligned}$
		\STATE $v^{s+1}_t = \wt{\mu} + c^{s+1}_t$ 
		\STATE $\vtheta_{t+1}^{s+1} = \vtheta_t^{s+1} + \alpha v^{s+1}_t$
		\ENDFOR
		\ENDFOR
		\STATE {\bfseries return} $\vtheta_A\coloneqq\vtheta_t^{s+1}$ with $(s,t)$ picked uniformly at random from $\{[0,S-1]\times[0,m-1]\}$
	\end{algorithmic}
\end{algorithm} 

We can now introduce Stochastic Variance-Reduced Policy Gradient (SVRPG) for a generic policy gradient estimator $g$. Pseudo-code is provided in Algorithm \ref{alg:svrpg}.
The overall structure is the same as Algorithm \ref{alg:svrg}, 
but the snapshot gradient is not exact and the gradient estimate used between snapshots is corrected using importance weighting:\footnote{Note that $g$ can be any unbiased estimator, with or without baseline. The unbiasedness is required for theoretical results (\eg Appendix~\ref{A:gradient_estimators}).}
\begin{align*}
        \blacktriangledown J(\vtheta_{t}) &= \wh{\nabla}_N J(\wt{\vtheta}) + g(\tau|\vtheta_t) - \omega(\tau|\vtheta_t, \wt{\vtheta}) g(\tau|\wt{\vtheta})
\end{align*}
for any $t \in \{0,\ldots,m-1\}$,
where $\wh{\nabla}_N J(\wt{\vtheta})$ is as in Eq.~\eqref{E:policygradient.estimate} where $\mathcal{D}_N$ is sampled using the snapshot policy $\pi_{\wt{\vtheta}}$, $\tau$ is sampled from the current policy $\pi_{\vtheta_t}$, and $\omega(\tau|\vtheta_t, \wt{\vtheta}) = \frac{p(\tau|\wt{\vtheta})}{p(\tau|\vtheta_t)}$ is an importance weight from $\pi_{\vtheta_t}$ to the snapshot policy $\pi_{\wt{\vtheta}}$. 
Similarly to SVRG, we have that $\vtheta_0 := \wt{\vtheta}$, and the update is a FG step.
Our update is still fundamentally on-policy since the weighting concerns only the correction term. However, this partial ``off-policyness'' represents an additional source of variance. This is a well-known issue of importance sampling~\citep[\eg][]{thomas2015high}. To mitigate it, we use mini-batches of trajectories of size $B \ll N$ to average the correction, \ie
\begin{align}\label{E:svrpg.estimate.batch}
        \blacktriangledown J(\vtheta_{t}) &:= v_t= \wh{\nabla}_N J(\wt{\vtheta})\\ \notag
                                            & \quad{} + 
        \underbracket{
        \frac{1}{B} \sum_{i=0}^{B-1} \left[
        g(\tau_i|\vtheta_t) - \omega(\tau_i|\vtheta_t, \wt{\vtheta}) g(\tau_i|\wt{\vtheta})
\right]}_{c_t}.
\end{align}
It is worth noting that the full gradient and the correction term have the same expected value: $\EVV[\tau_i \sim p(\cdot|\vtheta_t)]{\frac{1}{B} \sum_{i=0}^{B-1} \omega(\tau_i|\vtheta_t, \wt{\vtheta}) g(\tau_i|\wt{\vtheta})} = \nabla J(\wt{\vtheta})$.\footnote{The reader can refer to Appendix~\ref{A:gradient_estimators} for off-policy gradients and variants of REINFORCE and G(PO)MDP.}
This property will be used to prove Lemma~\ref{L:svrpg.properties}.
The use of mini-batches is also common practice in SVRG since it can yield a performance improvement even in the supervised case~\citep{harikandeh2015stopwasting,konevcny2016mini}. It is easy to show that the SVRPG estimator has the following, desirable properties:

\begin{restatable}[]{lemma}{svrpgprop}\label{L:svrpg.properties}
        Let $\wh{\nabla}_N J(\vtheta)$ be an unbiased estimator of~\eqref{E:policygradient}
and let $\vtheta^* \in \argmin_{\vtheta} \{J(\vtheta)\}$. Then, the SVRG estimate in~\eqref{E:svrpg.estimate.batch} is \emph{unbiased}
\begin{equation}\label{eq:unbiased}
\mathop{\mathbb{E}}
\left[\blacktriangledown J(\vtheta)\right] = \gradJ{\vtheta}.
\end{equation}
and regardless of the mini-batch size $B$:\footnote{
For any vector $\mathbf{x}$, we use $\Var[\mathbf{x}]$ to denote the trace of the covariance matrix, \ie $\Tr(\EVV[]{(\mathbf{x}-\EVV[]{\mathbf{x}})(\mathbf{x}-\EVV[]{\mathbf{x}})^T})$.}
\begin{equation}\label{eq:zerovar}
	\Var\left[\gradBlack{\vtheta^*}\right] = 
    \Var\left[\wh{\nabla}_N J(\vtheta^*)\right].
\end{equation}
\end{restatable}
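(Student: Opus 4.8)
The plan is to read off both identities directly from the definition of the corrected gradient in~\eqref{E:svrpg.estimate.batch}, using three ingredients: (i) the unbiasedness of $\wh{\nabla}_N J$ assumed in the statement, $\EV[\wh{\nabla}_N J(\wt{\vtheta})] = \gradJ{\wt{\vtheta}}$; (ii) the unbiasedness of the per-trajectory estimator $g$ (Appendix~\ref{A:gradient_estimators}), so that $\frac{1}{B}\sum_i g(\tau_i|\vtheta)$ has mean $\gradJ{\vtheta}$ when $\tau_i\sim p(\cdot|\vtheta)$; and (iii) the importance-weighting identity recorded just above the lemma, $\EV\big[\frac{1}{B}\sum_i \omega(\tau_i|\vtheta,\wt{\vtheta})\, g(\tau_i|\wt{\vtheta})\big] = \gradJ{\wt{\vtheta}}$ for $\tau_i\sim p(\cdot|\vtheta)$. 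The structural fact to isolate first is that the $N$ snapshot trajectories (drawn from $\pi_{\wt{\vtheta}}$, defining $\wh{\nabla}_N J(\wt{\vtheta})$) and the $B$ mini-batch trajectories (drawn fresh from $\pi_{\vtheta}$, defining the correction $c_t$) are sampled independently, so that $\wh{\nabla}_N J(\wt{\vtheta})$ and $c_t$ are independent random vectors; this is what lets covariances vanish below.

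For~\eqref{eq:unbiased}, fix the pair $(\vtheta,\wt{\vtheta})$ and apply linearity of expectation to the three summands obtained by splitting the correction in~\eqref{E:svrpg.estimate.batch} (the snapshot estimate, the mini-batch estimate of $\gradJ{\vtheta}$, and the importance-weighted term): by (i) the first contributes $\gradJ{\wt{\vtheta}}$, by (ii) the second contributes $\gradJ{\vtheta}$, and by (iii) the third contributes $-\gradJ{\wt{\vtheta}}$, for a total of $\gradJ{\vtheta}$. The only subtlety is bookkeeping: the claim concerns the estimator at a \emph{fixed} parameter $\vtheta$, which must not be conflated with the running iterate $\vtheta_t$ inside the inner loop, since $\vtheta_t$ is correlated with the snapshot batch through $\wt{\mu}$.

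For~\eqref{eq:zerovar}, observe that $\gradBlack{\vtheta^*}$ denotes the estimator in the regime where the snapshot and the current parameter coincide, $\wt{\vtheta} = \vtheta^*$ (the inner loop is initialized at the snapshot, cf.\ $\vtheta_0 := \wt{\vtheta}$). In that regime every importance weight degenerates, $\omega(\tau_i|\vtheta^*,\vtheta^*) = p(\tau_i|\vtheta^*)/p(\tau_i|\vtheta^*) = 1$, so each summand of the correction equals $g(\tau_i|\vtheta^*) - g(\tau_i|\vtheta^*) = 0$, whence $c_t \equiv 0$ as a random vector, for every realization of the $\tau_i$ and for every $B$. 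Therefore $\gradBlack{\vtheta^*} = \wh{\nabla}_N J(\vtheta^*)$ identically, so their covariance matrices — hence the traces defining $\Var[\cdot]$ — coincide, which is~\eqref{eq:zerovar}. Equivalently, by the independence noted above one may expand $\Var[\gradBlack{\vtheta^*}] = \Var[\wh{\nabla}_N J(\vtheta^*)] + \Var[c_t] + 2\Tr\Cov\big(\wh{\nabla}_N J(\vtheta^*),\, c_t\big)$ with vanishing cross term, and is left to note $\Var[c_t] = 0$; the hypothesis that $\vtheta^*$ is stationary identifies it as the natural point at which to take the snapshot but is not otherwise needed.

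I expect the calculations to be entirely routine; the real work is conceptual, namely pinning down the sampling model precisely enough that~\eqref{eq:unbiased} is a genuine identity — in particular making explicit (a) the independence of the snapshot batch and the mini-batch and (b) that the expectation holds $\vtheta$ and $\wt{\vtheta}$ fixed. Once those are in place, everything reduces to linearity of expectation together with the degeneracy $\omega \equiv 1$ at $\wt{\vtheta} = \vtheta^*$.
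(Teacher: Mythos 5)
Your proposal is correct and follows essentially the same route as the paper: unbiasedness by linearity of expectation across the three summands (using the off-policy identity for the importance-weighted term), and the variance identity from the observation that the correction vanishes because the \emph{same} trajectories appear in both halves of the correction term and $\omega\equiv 1$ when the snapshot and current parameters coincide. The only cosmetic difference is that the paper phrases the second part as a limit $\vtheta\to\vtheta^*$ (with $\wt{\vtheta}\to\vtheta^*$) and invokes continuity, whereas you evaluate directly at the coincidence point; your added remarks that independence of the two batches kills the cross-covariance and that stationarity of $\vtheta^*$ is not actually needed are both accurate.
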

Previous results hold for both REINFORCE and G(PO)MDP.
In particular, the latter result suggests that an SVRG-like algorithm using $\gradBlack{\vtheta}$ can achieve faster convergence, by performing much more parameter updates with the same data without introducing additional variance (at least asymptotically).
Note that the randomized return value of Algorithm \ref{alg:svrpg} does not affect online learning at all, but will be used as a theoretical tool in the next section.

\vspace{-0.05in}
\section{Convergence Guarantees of SVRPG}\label{sec:conv}
\vspace{-0.05in}
In this section, we state the convergence guarantees for SVRPG with REINFORCE or G(PO)MDP gradient estimator.
We mainly leverage on the recent analysis of non-concave SVRG~\cite{reddi2016stochastic,allen2016variance}.
Each of the three challenges presented at the beginning of Section~\ref{sec:alg} can potentially prevent convergence, so we need additional assumptions.
In Appendix~\ref{app:gauss} we show how Gaussian policies satisfy these assumptions.

\textit{1) Non-concavity.} A common assumption, in this case, is to assume the objective function to be $L$-smooth.
However, in RL we can consider the following assumption which is sufficient for the $L$-smoothness of the objective (see Lemma~\ref{lemma:lsmooth}).
	\begin{restatable}[On policy derivatives]{assumption}{boundedscore}\label{ass:bounded_score}
		For each state-action pair $(s,a)$, any value of $\vtheta$, and all parameter components $i,j$ there exist constants $0 \leq G,F<\infty$ such that:
\[
		\left|\nabla_{\theta_i}\log\pi_{\vtheta}(a\vert s)\right| \leq \GRADLOG, \qquad
        \left|\frac{\partial^2}{\partial\theta_i\partial\theta_j}\log\pi_{\vtheta}(a \vert s)\right| \leq \HESSLOG.
\]
	\end{restatable}

\textit{2) FG Approximation.}
Since we cannot compute an exact full gradient, we require the variance of the estimator to be bounded.
This assumption is similar in spirit to the one in~\citep{harikandeh2015stopwasting}.
	\begin{restatable}[On the variance of the gradient estimator]{assumption}{varreinforce}\label{ass:REINFORCE}
		There is a constant $V<\infty$ such that, for any policy $\pol$:
		\[
			\Var\left[g(\cdot\vert\vtheta)\right] \leq \VARRF.
		\]
	\end{restatable}

\textit{3) Non-stationarity.} 
Similarly to what is done in SL~\citep{cortes2010learning}, we require the variance of the importance weight to be bounded.
	\begin{restatable}[On the variance of importance weights]{assumption}{varweights}\label{ass:M2}
		There is a constant $W<\infty$ such that, for each pair of policies encountered in Algorithm~\ref{alg:svrpg} and for each trajectory,
		\[
                \mathbb{V}ar\left[\omega(\tau| \vtheta_1, \vtheta_2)\right] \leq \VARIS, \quad \forall \vtheta_1,\vtheta_2 \in \realspace^d , \tau \sim p(\cdot|\vtheta_1).
		\]
	\end{restatable}
Differently from Assumptions~\ref{ass:bounded_score} and ~\ref{ass:REINFORCE}, Assumption~\ref{ass:M2} must be enforced by a proper handling of the epoch size $m$.

We can now state the convergence guarantees for SVRPG.
\begin{restatable}[Convergence of the SVRPG algorithm]{theorem}{convergence}\label{theo:convergence}
Assume the REINFORCE or the G(PO)MDP gradient estimator is used in SVRPG (see Equation~\eqref{E:svrpg.estimate.batch}).
Under Assumptions~\ref{ass:bounded_score}, \ref{ass:REINFORCE} and \ref{ass:M2}, the parameter vector $\vtheta_A$ returned by Algorithm~\ref{alg:svrpg} after $T=m\times S$ iterations has, for some positive constants $\psi,\zeta, \xi$ and for proper choice of the step size $\alpha$ and the epoch size $m$, the following property:
\begin{align*}
	&\EVV[]
	{\norm[2]{\nabla J(\vtheta_A)}^2} 
		\leq
		\frac{J(\vtheta^*)-J(\vtheta_0)}{\psi T} +
		\frac{\zeta}{N}
		+\frac{\xi}{B},
\end{align*}
where $\vtheta^*$ is a global optimum and $\psi,\zeta,\xi$ depend only on $\GRADLOG,\HESSLOG,\VARRF,\VARIS,\alpha$ and $m$.
\end{restatable}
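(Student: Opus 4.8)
The plan is to follow the standard non-concave SVRG analysis of \citet{reddi2016stochastic}, adapted to handle the three complications identified in Section~\ref{sec:alg}: the inexact snapshot gradient, the importance-weighted correction term, and the per-epoch resampling. The starting point is the $L$-smoothness of $J$, which follows from Assumption~\ref{ass:bounded_score} via Lemma~\ref{lemma:lsmooth}; with $L$ in hand I would write the descent-type inequality $J(\vtheta_{t+1}^{s+1}) \geq J(\vtheta_t^{s+1}) + \alpha \dotprod{\nabla J(\vtheta_t^{s+1})}{v_t^{s+1}} - \tfrac{L\alpha^2}{2}\norm[2]{v_t^{s+1}}^2$, take conditional expectations, and use the unbiasedness from Lemma~\ref{L:svrpg.properties} so that $\EV[\dotprod{\nabla J}{v_t}] = \norm[2]{\nabla J(\vtheta_t^{s+1})}^2$. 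The crux of the argument is then to control $\EV[\norm[2]{v_t^{s+1}}^2]$ in terms of $\norm[2]{\nabla J(\vtheta_t^{s+1})}^2$ plus a bound on how far $\vtheta_t^{s+1}$ has drifted from the snapshot $\wt{\vtheta}^s$.

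For that variance bound I would decompose $v_t = \wh{\nabla}_N J(\wt{\vtheta}^s) + c_t$ and split into (i) the error of the snapshot estimate, $\wh{\nabla}_N J(\wt{\vtheta}^s) - \nabla J(\wt{\vtheta}^s)$, whose second moment is at most $\VARRF/N$ by Assumption~\ref{ass:REINFORCE} and i.i.d.\ averaging; and (ii) the fluctuation of the correction term $c_t$ around its mean $\nabla J(\wt{\vtheta}^s)$, which is a mini-batch average of size $B$ of $g(\tau_i|\vtheta_t) - \omega(\tau_i|\vtheta_t,\wt{\vtheta}^s) g(\tau_i|\wt{\vtheta}^s)$. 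Here is where the importance weight enters: bounding the second moment of the single-sample correction requires controlling $\EV[\norm[2]{\omega(\tau|\vtheta_t,\wt{\vtheta}^s) g(\tau|\wt{\vtheta}^s)}^2]$, for which I would use $\EV[\omega^2] = \Var[\omega] + 1 \leq \VARIS + 1$ (Assumption~\ref{ass:M2}) together with a uniform bound on $\norm[2]{g}$ coming from $G$ and the reward bound $R$. The remaining piece is a "proximity" estimate showing $\EV[\norm[2]{g(\tau|\vtheta_t) - \omega g(\tau|\wt{\vtheta}^s)}^2]$ is Lipschitz-like in $\norm[2]{\vtheta_t - \wt{\vtheta}^s}$: the two terms agree when $\vtheta_t = \wt{\vtheta}^s$ (where $\omega \equiv 1$), and smoothness of $g$ and of $\omega$ in $\vtheta$ — again from $G$ and $F$ — yields a bound of the form $C_1 \norm[2]{\vtheta_t - \wt{\vtheta}^s}^2 + (\text{constant involving } \VARRF, \VARIS)$. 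Since $\vtheta_t^{s+1} - \wt{\vtheta}^s = \alpha \sum_{j<t} v_j^{s+1}$, the drift term $\EV[\norm[2]{\vtheta_t - \wt{\vtheta}^s}^2]$ is itself bounded by $\alpha^2 m \sum_{j<t}\EV[\norm[2]{v_j}^2]$, so the variance bound becomes recursive in the quantities $\EV[\norm[2]{v_j}^2]$.

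To close the recursion I would introduce a Lyapunov function of the form $\Phi_t^{s+1} = -\EV[J(\vtheta_t^{s+1})] + \beta_t \EV[\norm[2]{\vtheta_t^{s+1} - \wt{\vtheta}^s}^2]$ with a carefully decreasing coefficient sequence $\{\beta_t\}$, exactly as in \citet{reddi2016stochastic}, chosen so that the drift contribution telescopes. Showing $\Phi_{t+1}^{s+1} \leq \Phi_t^{s+1} - \Gamma_t \norm[2]{\nabla J(\vtheta_t^{s+1})}^2 + (\text{noise floor})$ for a uniformly positive $\Gamma_t \geq \psi > 0$ requires $\alpha$ and $m$ to be small enough that $\beta_t$ stays bounded — this is the quantitative condition "$\alpha = O(1/(Lm))$-ish" and fixes the relationship between $\alpha$, $m$, $G$, $F$, $\VARRF$, $\VARIS$. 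Summing the Lyapunov inequality over all $t \in [0,m-1]$ and $s \in [0,S-1]$, noting $\Phi$ telescopes across epoch boundaries (since $\vtheta_0^{s+1} = \wt{\vtheta}^s$ makes the drift term vanish at epoch starts and $\beta_m = 0$ so it vanishes at epoch ends), and dividing by $T = mS$, the left-hand side becomes $\tfrac{1}{T}\sum_{s,t}\EV[\norm[2]{\nabla J(\vtheta_t^{s+1})}^2]$, which by the uniform-random choice of $(s,t)$ equals $\EV[\norm[2]{\nabla J(\vtheta_A)}^2]$; the telescoped $\Phi$ terms give $(J(\vtheta^*) - J(\vtheta_0))/(\psi T)$, and the accumulated noise floor gives the $\zeta/N + \xi/B$ terms.

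The main obstacle I anticipate is the variance bound for the importance-weighted correction term: unlike vanilla SVRG, where the correction is automatically a contraction because $\nabla z(x|\vtheta_t) - \nabla z(x|\wt{\vtheta})$ vanishes at $\vtheta_t = \wt{\vtheta}$ pointwise, here the weight $\omega(\tau|\vtheta_t,\wt{\vtheta})$ and the score/reward products interact, so one must simultaneously exploit (a) that $\omega \to 1$ as $\vtheta_t \to \wt{\vtheta}$, (b) smoothness of $g$ in $\vtheta$, and (c) the moment bound $\EV[\omega^2]\leq\VARIS+1$ to keep the cross terms finite — and the resulting constant must be shown to depend only on $G,F,\VARRF,\VARIS,\alpha,m$, not on the (unbounded) parameter distance, which forces the $\alpha,m$ smallness condition. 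The bookkeeping to verify that $\psi>0$ for a concrete admissible choice of $(\alpha,m)$ will be the other technically delicate point.
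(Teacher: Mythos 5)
Your overall architecture coincides with the paper's: $L$-smoothness of $J$ from Assumption~\ref{ass:bounded_score} (Lemma~\ref{lemma:lsmooth}), a descent inequality, a second-moment bound on the update direction that splits the correction term via $\pm g(\tau_i|\wt{\vtheta}^s)$ into a Lipschitz part controlled by $\norm[2]{\vtheta_t^{s+1}-\wt{\vtheta}^s}^2$ and an importance-weight part controlled by $\EVV[]{(\omega-1)^2}=\Var[\omega]\leq W$ together with the uniform bound on $\norm[2]{g}$, a Lyapunov function of the form $J(\vtheta_t^{s+1})-c_t\norm[2]{\vtheta_t^{s+1}-\wt{\vtheta}^s}^2$ with $c_m=0$, and telescoping over $t$ and $s$ combined with the uniformly random output index. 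This is exactly Lemmas~\ref{lemma:aux2} and~\ref{lemma:aux1} and the final summation in the paper. (One small simplification relative to your plan: the paper never needs smoothness of $\omega$ in $\vtheta$; since $\EVV[]{\omega}=1$, the term $\EVV[]{(\omega-1)^2}$ \emph{is} $\Var[\omega]$ and is bounded by $W$ directly from Assumption~\ref{ass:M2}, giving the constant $\Gamma W/B$ noise floor.)

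There is, however, one step that fails as written: the claim that unbiasedness (Lemma~\ref{L:svrpg.properties}) yields $\EVV[]{\dotprod{\nabla J(\vtheta_t^{s+1})}{v_t^{s+1}}}=\EVV[]{\norm[2]{\nabla J(\vtheta_t^{s+1})}^2}$. Conditioning on everything up to the $t$-th mini-batch, the fresh samples make the correction term conditionally unbiased, so the conditional mean of $v_t^{s+1}$ is $\nabla J(\vtheta_t^{s+1})+e_s$ with $e_s=\wh{\nabla}_N J(\wt{\vtheta}^s)-\nabla J(\wt{\vtheta}^s)$; but $e_s$ is \emph{not} independent of $\vtheta_t^{s+1}$, because every iterate of the epoch was computed using $\wh{\nabla}_N J(\wt{\vtheta}^s)$. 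Hence $\EVV[]{\dotprod{\nabla J(\vtheta_t^{s+1})}{e_s}}$ is a genuine covariance term that does not vanish (it is zero only at $t=0$, or in exact-full-gradient SVRG where $e_s\equiv 0$). The paper isolates precisely this issue in Lemma~\ref{lemma:aux0}: the covariance is bounded by Cauchy--Schwarz and Young's inequality as $\frac{1}{2N}\Vars{g(\cdot\vert\wt{\vtheta}^s)}+\frac{1}{2}\Ets[t-1]{\norm[]{\nabla J(\vtheta_t^{s+1})}^2}$, which costs half of the useful descent coefficient and contributes another $O(\nicefrac{1}{N})$ piece to $\zeta$; the same correction is needed a second time when expanding $\Ets{\norm[]{\vtheta_{t+1}^{s+1}-\wt{\vtheta}^s}^2}$, where the cross term $\dotprod{v_t^{s+1}}{\vtheta_t^{s+1}-\wt{\vtheta}^s}$ suffers from the identical correlation. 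Your plan accounts for the snapshot error only inside the variance of $v_t^{s+1}$; without adding this covariance control the descent step is invalid, and with it your argument goes through and is essentially the paper's proof.
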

Refer to Appendix~\ref{app:proofs} for a detailed proof involving the definition of the constants and the meta-parameter constraints.
By analysing the upper-bound in Theorem~\ref{theo:convergence} we observe that: I) the $O(\nicefrac{1}{T})$ term is coherent with results on non-concave SVRG~\citep[\eg][]{reddi2016stochastic}; II) the $O(\nicefrac{1}{N})$ term is due to the FG approximation and is analogous to the one in~\citep{harikandeh2015stopwasting}; III) the $O(\nicefrac{1}{B})$ term is due to importance weighting.
To achieve asymptotic convergence, the batch size $N$ and the mini-batch size $B$ should increase over time. 
In practice, it is enough to choose $N$ and $B$ large enough to make the second and the third term negligible, \ie to mitigate the variance introduced by FG approximation and importance sampling, respectively.
Once the last two terms can be neglected, the number of trajectories needed to achieve $\norm[2]{\gradJ{\vtheta}}^2\leq\epsilon$ is $O(\frac{B+\nicefrac{N}{m}}{\epsilon})$. In this sense, an advantage over batch gradient ascent can be achieved with properly selected meta-parameters. In Section~\ref{sec:stopping} we propose a joint selection of step size $\alpha$ and epoch size $m$.
Finally, from the return statement of Algorithm \ref{alg:svrpg}, it is worth noting that $J(\vtheta_A)$ can be seen as the average performance of all the policies tried by the algorithm. This is particularly meaningful in the context of online learning that we are considering in this paper.

\vspace{-0.05in}
\section{Remarks on SVRPG}
\vspace{-0.05in}
The convergence guarantees presented in the previous section come with requirements on the meta-parameters (\ie $\alpha$ and $m$) that may be too conservative for practical applications.
Here we provide a practical and automatic way to choose the step size $\alpha$ and the number of sub-iterations $m$ performed between snapshots.
Additionally, we provide a variant of SVRPG exploiting a variance-reduction technique for importance weights.
Despite lacking theoretical guarantees, we will show in Section~\ref{sec:exp} that this method can outperform the baseline SVRPG (Algorithm~\ref{alg:svrpg}).

\vspace{-0.05in}
\subsection{Full Gradient Update}
\vspace{-0.05in}
As noted in Section \ref{sec:alg}, the update performed at the beginning of each epoch is equivalent to a full-gradient update. In our setting, where collecting samples is particularly expensive, the $B$ trajectories collected using the snapshot trajectory $\pi_{\wt{\vtheta}^s}$ feels like a waste of data (the term $\sum_i g(\tau_i) - \omega(\tau_i) g(\tau_i) =0$ since $\vtheta_0 = \wt{\vtheta}$).
In practice, we just perform an approximate full gradient update using the $N$ trajectories sampled to compute $\gradApp{\wt{\vtheta}^s}{N}$, \ie
\begin{align*}
	&\vtheta_{1}^{s+1} = \wt{\vtheta}^s + \alpha\gradApp{\wt{\vtheta}^s}{N} \\
	&\vtheta_{t+1}^{s+1} = \vtheta_t^{s+1} + \alpha 
        \blacktriangledown J(\vtheta^{s+1}_t)
        \text{ for $t=1,\dots,m-1$}.
\end{align*}
In the following, we will always use this practical variant.

\vspace{-0.05in}
\subsection{Meta-Parameter Selection}\label{sec:stopping}
\vspace{-0.05in}
The step size $\alpha$ is crucial to balance variance reduction and efficiency, while
the epoch length $m$ influences the variance introduced by the importance weights. Low values of $m$ are associated with small variance but increase the frequency of snapshot points (which means many FG computations). High values of $m$ may move policy $\pi_{\vtheta_t}$ far away from the snapshot policy $\pi_{\wt{\vtheta}}$, causing large variance in the importance weights. We will jointly set the two meta-parameters.

\textbf{Adaptive step size.}
A standard way to deal with noisy gradients is to use adaptive strategies to compute the step size.
ADAptive Moment estimation (ADAM)~\citep{kingma2014adam} stabilizes the parameter update by computing learning rates for each parameter based on an incremental estimate of the gradient variance.
Due to this feature, we would like to incorporate ADAM in the structure of the SVRPG update.
Recall that SVRPG performs two different updates of the parameters $\vtheta$: I) FG update in the snapshot; II) corrected gradient update in the sub-iterations.
Given this structure, we suggest using two separate ADAM estimators:
\begin{align*}
        \vtheta^{s+1}_1 &= \wt{\vtheta}^s + \alpha^{\textsc{FG}}_s\left(\wh{\nabla}_N J(\wt{\vtheta}^s) \right)\\
        \vtheta^{s+1}_{t+1} &= \vtheta^{s+1}_t + \alpha^{\textsc{SI}}_{s+1,t}\left( 
        \blacktriangledown J(\vtheta^{s+1}_t)\right)
        \text{ for $t=1,\dots,m-1$},
\end{align*}
where $\alpha^{\textsc{FG}}_{s}$ is associated with the snapshot and $\alpha^{\textsc{SI}}_{s+1,t}$ with the sub-iterations (see Appendix~\ref{app:practicalsvrpg} for details).
By doing so, we decouple the contribution of the variance due to the approximate FG from the one introduced by the sub-iterations.
Note that these two terms have different orders of magnitude since are estimated with a different number of trajectories ($B \ll N$) and the estimator in the snapshot does not require importance weights.
The use of two ADAM estimators allows to capture and exploit this property.

\textbf{Adaptive epoch length.}
It is easy to imagine that a predefined schedule (\eg $m$ fixed in advance or changed with a policy-independent process) may poorly perform due to the high variability of the updates.
In particular, given a fixed number of sub-iterations $m$, the variance of the updates in the sub-iterations depends on the snapshot policy and the sampled trajectories.
Since the ADAM estimate partly captures such variability,  we propose to take a new snapshot (\ie interrupt the sub-iterations) whenever the step size $\alpha^{\textsc{SI}}$ proposed by ADAM for the sub-iterations is smaller than the one for the FG (\ie $\alpha^{\textsc{FG}}$).
If the latter condition is verified, it amounts to say that the noise in the corrected gradient has overcome the information of the FG.
Formally, the stopping condition is as follows
\[
        \textbf{If }        \frac{\alpha^{\textsc{FG}}}{N} > \frac{\alpha^{\textsc{SI}}}{B} \textbf{ then } \text{take snapshot,}
\]
where we have introduced $N$ and $B$ to take into account the trajectory efficiency (\ie weighted advantage).
The less the number of trajectories used to update the policy, the better.
Including the batch sizes in the stopping condition allows us to optimize the trade-off between the quality of the updates and the cost of performing them.

\vspace{-0.05in}
\subsection{Normalized Importance Sampling}\label{sec:prac}
\vspace{-0.05in}
As mentioned in Section~\ref{sec:stopping}, importance weights are an additional source of variance. A standard way to cope with this issue is self-normalization~\citep[\eg][]{precup2000eligibility,owenmcbook}.
This technique can reduce the variance of the importance weights at the cost of introducing some bias~\citep[][Chapter 9]{owenmcbook}.
Whether the trade-off is advantageous depends on the specific task.  
Introducing self-normalization in the context of our algorithm, we switch from Eq.~\eqref{E:svrpg.estimate.batch} to:
\begin{align*}
        \blacktriangledown J(\vtheta_{t}) &= \wh{\nabla}_N J(\wt{\vtheta}) + \frac{1}{B} \sum_{i=0}^{B-1} \left[g(\tau_i|\vtheta_t)\right]\\ 
                                          &\qquad{} - \frac{1}{\Omega} \sum_{i=0}^{B-1} \left[ \omega(\tau_i|\vtheta_t, \wt{\vtheta}) g(\tau_i|\wt{\vtheta})
\right].
\end{align*}
where $\Omega = \sum_{i=0}^{B-1}\omega(\tau_i|\vtheta_t, \wt{\vtheta})$.
In Section \ref{sec:exp} we show that self-normalization can provide a performance improvement.

\vspace{-0.05in}
\section{Related Work}
\vspace{-0.05in}
Despite the considerable interest received in SL, variance-reduced gradient approaches have not attracted the RL community.
As far as we know, there are just two applications of SVRG in RL.
The first approach~\citep{du2017svrgpe} aims to exploit SVRG for policy evaluation.
The policy evaluation problem is more straightforward than the one faced in this paper (control problem).
In particular, since the goal is to evaluate just the performance of a predefined policy, the optimization problem is stationary.
The setting considered in the paper is the one of policy evaluation by minimizing the empirical mean squared projected Bellman error (MSPBE) with a linear approximation of the value function. \citet{du2017svrgpe} shown that this problem can be equivalently reformulated as a convex-concave saddle-point problem that is characterized by a finite-sum structure.
This problem can be solved using a variant of SVRG~\citep{Palaniappan2016svrgsaddle} for which convergence guarantees have been provided.
The second approach~\citep{xu2017svrgtrpo} uses SVRG as a practical method to solve the optimization problem faced by Trust Region Policy Optimization (TRPO) at each iteration. This is just a direct application of SVRG to a problem having finite-sum structure since no specific structure of the RL problem is exploited.
It is worth to mention that, for practical reasons, the authors proposed to use a Newton conjugate gradient method with SVRG.

In the recent past, there has been a surge of studies investigating variance reduction techniques for policy gradient methods.
The specific structure of the policy gradient allows incorporating a baseline (\ie a function $b :\mathcal{S}\times\mathcal{A} \to \realspace$) without affecting the unbiasedness of the gradient~\citep[\eg][]{williams1992simple,weaver2001optimal,peters2008reinforcement,Thomas2017actionbaseline,wu2018variance}.
Although the baseline can be arbitrarily selected, literature often refers to the optimal baseline as the one minimizing the variance of the estimate.
Nevertheless, even the baseline needs to be estimated from data. This fact may partially reduce its effectiveness by introducing variance.
Even if these approaches share the same goal as SVRG, they are substantially different.
In particular, the proposed SVRPG does not make explicit use of the structure of the policy gradient framework, and it is independent of the underlying gradient estimate (\ie with or without baseline).
This suggests that would be possible to integrate an ad-hoc SVRPG baseline to further reduce the variance of the estimate.
Since this paper is about the applicability of SVRG technique to RL, we consider this topic as future work.
Additionally, the experiments show that SVRPG has an advantage over G(PO)MPD even when the baseline is used (see the half-cheetah domain in Section~\ref{sec:exp}).

Concerning importance weighting techniques, RL has made extensive use of them for off-policy problems~\citep[\eg][]{precup2000eligibility,thomas2015high}. However, as mentioned before, SVRPG cannot be compared to such methods since it is in all respects an on-policy algorithm. Here, importance weighting is just a statistical tool used to preserve the unbiasedness of the corrected gradient.

\vspace{-0.05in}
\section{Experiments}\label{sec:exp}
\vspace{-0.05in}
In this section, we evaluate the performance of SVRPG and compare it with policy gradient (PG) on well known continuous RL tasks: Cart-pole balancing and Swimmer~\citep[\eg][]{duan2016benchmarking}.
We consider G(PO)MDP since it has a smaller variance than REINFORCE.
For our algorithm, we use a batch size $N=100$, a mini-batch size $B=10$, and the jointly adaptive step size $\alpha$ and epoch length $m$ proposed in Section \ref{sec:stopping}. Since the aim of this comparison is to show the improvement that SVRG-flavored variance reduction brings to SG in the policy gradient framework, we set the batch size of the baseline policy gradient algorithm to $B$. In this sense, we measure the improvement yielded by computing snapshot gradients and using them to adjust parameter updates. Since we evaluate on-line performance over the number of sampled trajectories, the cost of computing such snapshot gradients is automatically taken into consideration. To make the comparison fair, we also use Adam in the baseline PG algorithm, which we will denote simply as G(PO)MDP in the following.
In all the experiments, we use deep Gaussian policies with adaptive standard deviation (details on network architecture in Appendix \ref{app:exp}).
Each experiment is run $10$ times with a random policy initialization and seed, but this initialization is shared among the algorithms under comparison.
The length of the experiment, \ie the total number of trajectories, is fixed for each task. Performance is evaluated by using test-trajectories on a subset of the policies considered during the learning process. We provide average performance with 90\% bootstrap confidence intervals.  
Task implementations are from the \textit{rllab} library \cite{duan2016benchmarking}, on which our agents are also based.\footnote{Code available at \url{github.com/Dam930/rllab}.}
More details on meta-parameters and exhaustive task descriptions are provided in Appendix \ref{app:exp}.

\begin{figure*}[t]
    \begin{subfigure}[b]{0.49\textwidth}
	\includegraphics[width=.98\textwidth]{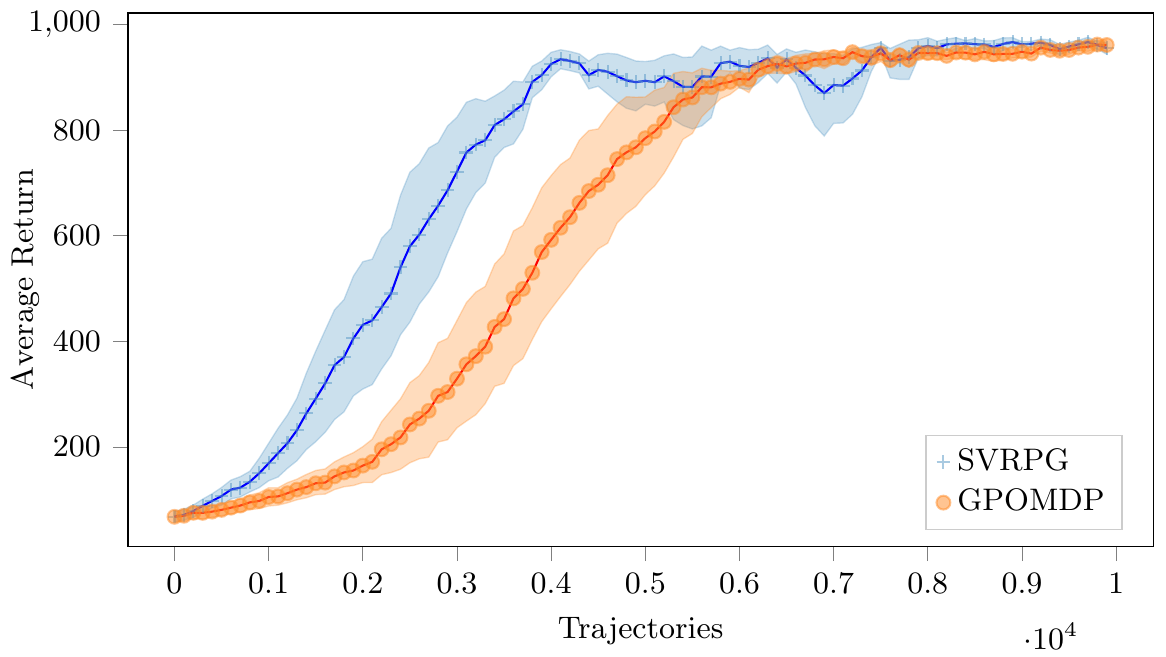}
	\vspace{-0.1in}
	\caption{SVRPG vs G(PO)MDP on Cart-pole.}
	\label{fig:cartpole}
    \end{subfigure}\hfill
    \begin{subfigure}[b]{0.49\textwidth}
	\includegraphics[width=0.98\textwidth]{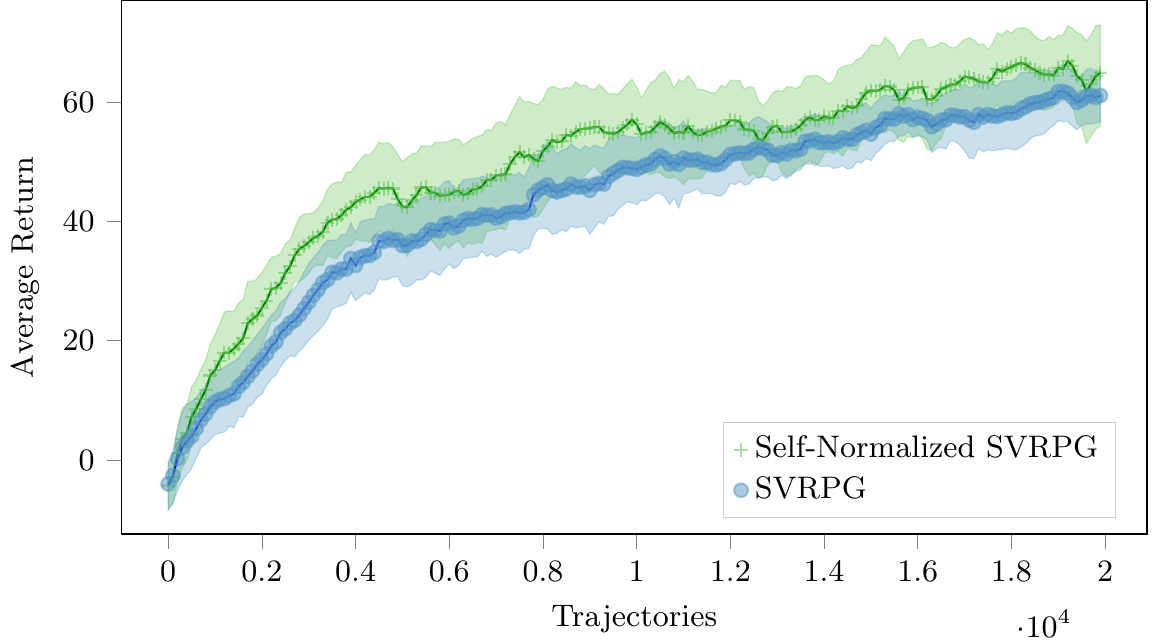}
	\vspace{-0.1in}
	\caption{Self-Normalized SVRPG vs SVRPG on Swimmer.}
	\label{fig:swimmertwo}
    \end{subfigure}
    
    \begin{subfigure}[b]{.49\textwidth}
	\includegraphics[width=0.98\textwidth]{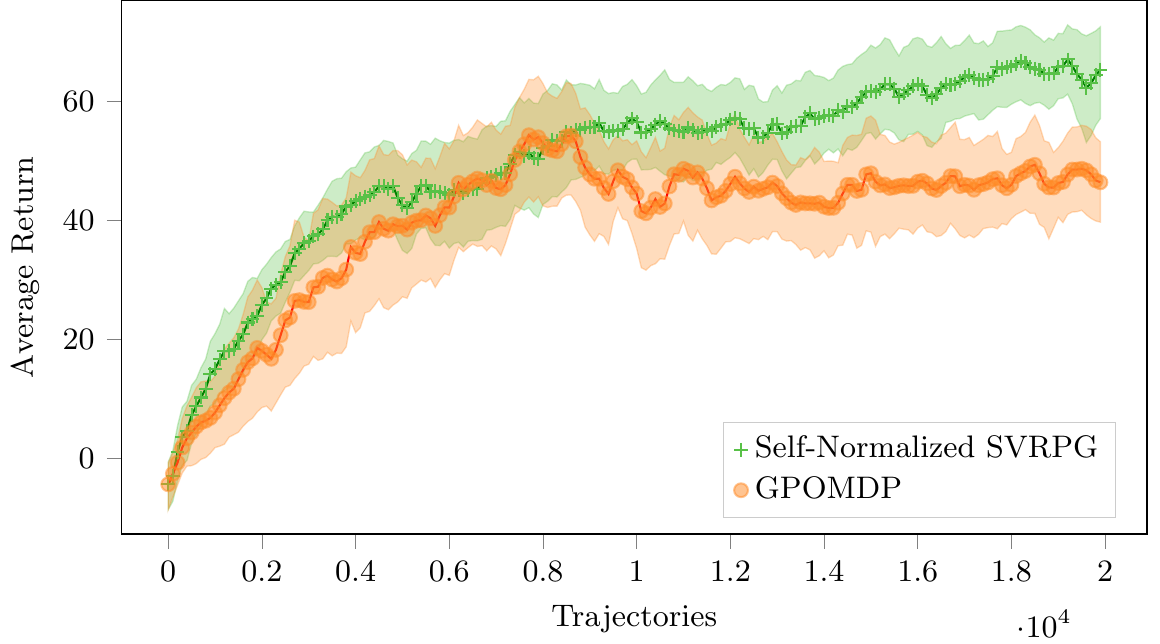}
	\vspace{-0.1in}
	\caption{Self-Normalized SVRPG vs G(PO)MDP on Swimmer.}
	\label{fig:swimmerone}
    \end{subfigure}\hfill
    \begin{subfigure}[b]{.49\textwidth}
    	\includegraphics[width=0.98\textwidth]{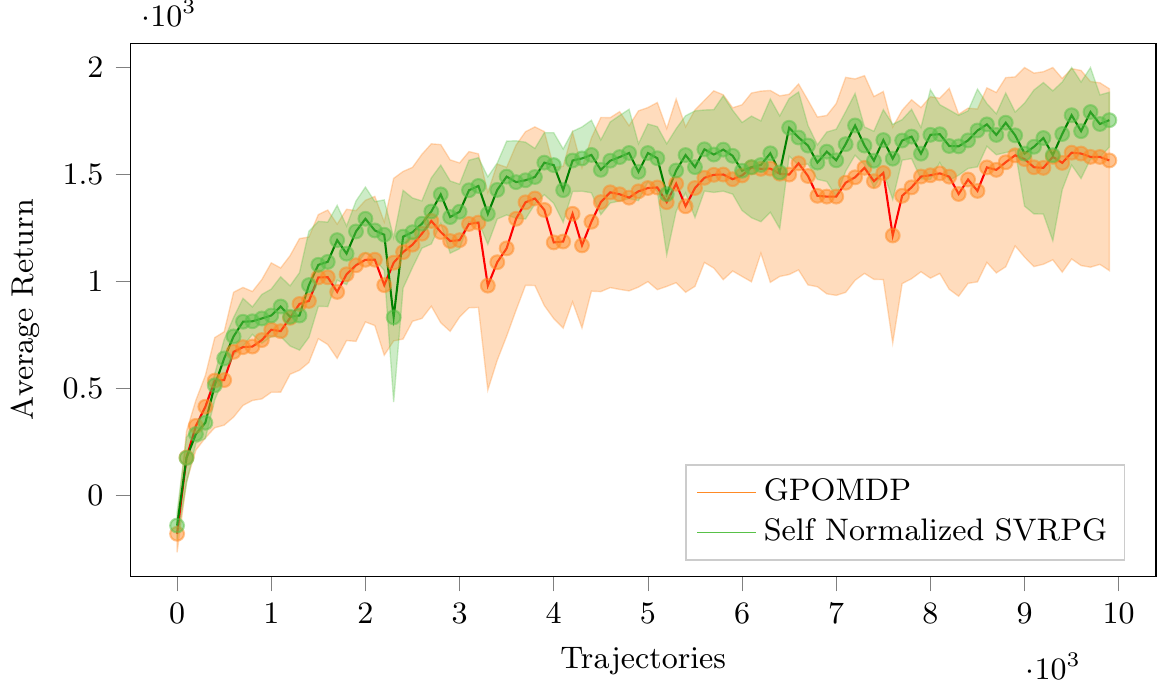}
    	\vspace{-0.1in}
    	\caption{Self-Normalized SVRPG vs G(PO)MDP on Half-Cheetah.}
    	\label{fig:cheetah}
    \end{subfigure}
    \caption{Comparison of on-line performance over sampled trajectories, with 90\% confidence intervals.}
\end{figure*}

%

Figure \ref{fig:cartpole} compares SVRPG with G(PO)MDP on a continuous variant of the classical Cart-pole task, which is a 2D balancing task. Despite using more trajectories on average for each parameter update, our algorithm shows faster convergence, which can be ascribed to the better quality of updates due to variance reduction.

The Swimmer task is a 3D continuous-control locomotion task. This task is more difficult than cart-pole. In particular, the longer horizon and the more complex dynamics can have a dangerous impact on the variance of importance weights. In this case, the self-normalization technique proposed in Section \ref{sec:prac} brings an improvement (even if not statistically significant), as shown in Figure \ref{fig:swimmertwo}. 
Figure \ref{fig:swimmerone} shows self-normalized SVRPG against G(PO)MDP. Our algorithm outperforms G(PO)MDP for almost the entire learning process. Also here, we note an increase of speed in early iterations, and, toward the end of the learning process, the improvement becomes statistically significant.


\textbf{Preliminary results on actor-critic.} 
Another variance-reduction technique in policy gradient consists of using baselines or \textit{critics}. This tool is orthogonal to the methods described in this paper, and the theoretical results of Section \ref{sec:conv} are general in this sense. In the experiments described so far, we compared against the so-called \textit{actor-only} G(PO)MDP, \ie without the baseline. To move towards a more general understanding of the variance issue in policy gradient, we also test SVRPG in an \textit{actor-critic} scenario. To do so, we consider the more challenging MuJoCo~\citep{todorov2012mujoco} Half-cheetah task, a 3D locomotion task that has a larger state-action space than Swimmer. Figure \ref{fig:cheetah} compares self-normalized SVRPG and G(PO)MDP on Half-cheetah, using the critic suggested in \cite{duan2016benchmarking} for both algorithms. Results are promising, showing that a combination of the baseline usage and SVRG-like variance reduction can yield an improvement that the two techniques alone are not able to achieve. Moreover, SVRPG presents a noticeably lower variance. The performance of actor-critic G(PO)MDP\footnote{\citet{duan2016benchmarking} report results on REINFORCE. However, inspection on \textit{rllab} code and documentation reveals that it is actually PGT \cite{sutton2000policy}, which is equivalent to G(PO)MDP \citep[shown by][]{peters2008reinforcement}. Using the name REINFORCE in a general way is inaccurate, but widespread.} on Half-Cheetah is coherent with the one reported in \cite{duan2016benchmarking}. Other results are not comparable since we did not use the critic.

\vspace{-0.05in}
\section{Conclusion}
\vspace{-0.05in}
In this paper, we introduced SVRPG, a variant of SVRG designed explicitly for RL problems.
The control problem considered in the paper has a series of difficulties that are not common in SL.
Among them, non-concavity and approximate estimates of the FG have been analysed independently in SL~\citep[\eg][]{allen2016variance,reddi2016stochastic,harikandeh2015stopwasting} but never combined.
Nevertheless, the main issue in RL is the non-stationarity of the sampling process since the distribution underlying the objective function is policy-dependent.
We have shown that by exploiting importance weighting techniques, it is possible to overcome this issue and preserve the unbiasedness of the corrected gradient.
We have additionally shown that, under mild assumptions that are often verified in RL applications, it is possible to derive convergence guarantees for SVRPG.
Finally, we have empirically shown that practical variants of the theoretical SVRPG version can outperform classical actor-only approaches on benchmark tasks.
Preliminary results support the effectiveness of SVRPG also with a commonly used baseline for the policy gradient.
Despite that, we believe that it will be possible to derive a baseline designed explicitly for SVRPG to exploit the RL structure and the SVRG idea jointly.
Another possible improvement would be to employ the natural gradient~\cite{kakade2002natural} to better control the effects of parameter updates on the variance of importance weights. Future work should also focus on making batch sizes $N$ and $B$ adaptive, as suggested in~\cite{papini2017adaptive}.

\FloatBarrier
\section*{Acknowledgments}
This research was supported in part by French Ministry of Higher Education and Research, Nord-Pas-de-Calais Regional Council and French National Research Agency (ANR) under project ExTra-Learn (n.ANR-14-CE24-0010-01).
\bibliography{svrpg}
\bibliographystyle{icml2018}

\clearpage
\onecolumn
\appendix

\section{Policy Gradient Estimators} \label{A:gradient_estimators}
We present a brief overview of the two most widespread gradient estimators (REINFORCE~\citep{williams1992simple} and G(PO)MDP~\citep{baxter2001infinite}) both in on-policy and off-policy settings.
Let $\tau = \{\langle s_t,a_t \rangle\}_{t=0}^{H}= \{z_t\}_{t=0}^{H} = z_{0:H}$ is a $(H+1)$-steps trajectory.
Given that $\tau$ depends on the MDP $M=\{\mathcal{S},\mathcal{A}, \mathcal{P}, \Reward, \gamma, \rho\}$ and the actual policy $\pi$, the trajectory is
said drawn from density distribution $p(\tau|\pi,M)$ defined as:
\[
        p(\tau|\pi,M) = \rho(s_0) \pi(z_0) \prod_{k=1}^{H} \mathcal{P}(s_k|z_{k-1})\pi(z_k).
\]
We can now recall the definition of policy performance 
\[
        J(\pol) = \EVV[\tau \sim p(\cdot|\pol)]{\Reward(\tau)|M},
\]
where $\Reward(\tau) = \sum_{t=0}^{H}\gamma^t \Reward(z_t)$.
The policy gradient $\nabla J(\vtheta)$ is
\begin{equation}\label{E:onpolicygradient}
        \gradJ{\vtheta} = \EVV[\tau \sim p(\cdot|\pol)]{\nabla \log p(\tau|\vtheta)\Reward(\tau)} = \EVV[\tau \sim p(\cdot|\pol)]{\sum_{j=0}^{H} \gamma^j \mathcal{R}(z_j) \sum_{i=0}^{H} \nabla \log\pi(z_i)}.
\end{equation}

\textbf{On-policy setting.}
Consider a policy $\pol$ and let $\mathcal{D}_N = \{\tau_i\}_{i=1}^N$ be a dataset collected using policy $\pol$.
The REINFORCE gradient estimator~\citep{williams1992simple} provides a simple, unbiased way of estimating the gradient:
\begin{align*}
\gradApp{\vtheta}{N} = \frac{1}{N}\sum_{n=1}^{N}
\underbracket{
        \left(\sum_{h=0}^{H}\nabla \log \pol(z_h^n) \right)\left(\sum_{h=0}^{H}\gamma^h 
        r_h^n
- b(z_h^n)\right)
}_{g(\tau_n|\vtheta):=\nabla \log p(\tau_n|\vtheta)\Reward(\tau_n)}
,
\end{align*}
where subscripts denote the time step, superscripts denote the trajectory, $r_h^n$ is the reward actually collected at time $h$ from trajectory $\tau^n$ and $b : \mathcal{S}\times\mathcal{A} \to \realspace$~\citep[\eg][]{Thomas2017actionbaseline}.
The G(PO)MDP gradient estimator~\cite{baxter2001infinite} is a refinement of REINFORCE which is subject to less variance \cite{zhao2011analysis} while preserving the unbiasedness:
\begin{align*}
\gradApp{\vtheta}{N} = \frac{1}{N}\sum_{n=1}^{N}
\underbracket{
        \sum_{h=0}^{H}\left(\sum_{k=0}^{h} \nabla \log \pol (z_h^n) \right)\left(\gamma^h 
        r_h^n
        - b(z^n_h)\right)
}_{g(\tau_n|\vtheta)}.
\end{align*}
G(PO)MDP can be seen as a more efficient implementation of the REINFORCE algorithm. 
In fact, the latter does not perform an optimal credit assignment since it ignores that the reward at time $t$ does not depend on the action performed after time $t$.
G(PO)MDP overcomes this issue taking into account the causality of rewards in the REINFORCE definition of policy gradient.

\textbf{Off-policy setting.}
In off-policy setting two policies, called behavioural $\pi^B$ and target $\pi^T$, are involved.
The first is used to select actions for the interaction with the system, while the second is used to evaluate the agent performance and it is improved in each update.
Suppose now that we aim to estimate the performance of the target policy $\pi^T$ but we have samples collected using policy $\pi^B$.
We can use importance weight correction to correct the shift in the distribution and obtain an unbiased estimate of $J(\pi^T)$:
\[
        J(\pi^T) = \EVV[\tau \sim p(\cdot|\pi^T)]{\mathcal{R}(\tau)} = \EVV[\tau \sim p(\cdot|\pi^B)]{\omega(\tau) \mathcal{R}(\tau)}
\]
where $\omega(\tau|\pi^B,\pi^T) = \frac{p(\tau,\pi^T)}{p(\tau|\pi^B)} = \omega(z_{0:H}|\pi^B,\pi^T) = \prod_{w=0}^{H} \omega(z_w|\pi^B,\pi^T)$ and $\omega(z_w|\pi^B,\pi^T) = \frac{\pi^T(a_w|s_w)}{\pi^B(a_w|s_w)}$

The definition of the off-policy version of~\eqref{E:onpolicygradient} is~\citep[\eg][]{jurvcivcek2012reinforcement}
\begin{equation}\label{E:offpolicygradient}
        \nabla J(\pi^T) = \EVV[\tau \sim p(\cdot|\pi^B)]{\omega(\tau|\pi^B,\pi^T) \nabla \log p(\tau|\pi^T) \mathcal{R}(\tau)} = \EVV[\tau \sim p(\cdot|\pi^B)]{\omega(\tau|\pi^B,\pi^T) g(\tau|\pi^T)}. 
\end{equation}
For $\omega(\tau|\pi^B,\pi^T)$ being well defined the behavioural policy should have non-zero probability of selecting any action in every state \ie $\pi^B(a|s) > 0$ for any $(s,a)\in \mathcal{S} \times \mathcal{A}$.
Equation~\ref{E:offpolicygradient} is important for proving Theorem~\ref{theo:convergence} since it provides a common representation of REINFORCE and G(PO)MDP.

The off-policy version of REINFORCE is easily obtained by taking the empirical average of~\eqref{E:offpolicygradient}:
\[
        \nabla J(\pi^T) = \frac{1}{N} \sum_{n=1}^{N} \omega(\tau^n|\pi^B, \pi^T)
        \underbracket{
                \left(\sum_{h=0}^{H} \nabla \log \pi^{T}(z_h^n) \right)\left(\sum_{h=0}^{H}\gamma^h \Reward(z_h^n)\right)
        }_{g(\tau_n|\pi^T)}.
\]
The G(PO)MDP off-policy estimator is defined as follows
\[
        \nabla J(\pi^T) = \frac{1}{N} \sum_{n=1}^{N}
        \underbracket{
                \sum_{h=0}^H \left(\sum_{k=0}^h \nabla \log\pi^T(z_k^n)\right) \gamma^h \Reward(z_h^n) \omega(z_{0:h}|\pi^B,\pi^T)
        }_{\omega(\tau_n|\pi^B,\pi^T)g(\tau_n|\pi^T)}.
\]


\section{Proofs}\label{app:proofs}
In this section, we prove all the claims made in the paper, with the primary objective of proving Theorem \ref{theo:convergence}.
Our proof is adapted from the one of Theorem 2 from \cite{reddi2016stochastic} and has a very similar structure, but with all the additional challenges and assumptions described in Section \ref{sec:conv}.

Note that in the following we will make wide use of the following properties.
\begin{assumption}\label{asm:unbiasedness}
        We consider an estimate $\wh{\nabla}_N J(\vtheta)$ as in Eq.~\ref{E:policygradient.estimate} such that
\begin{enumerate}
        \item \textit{On-policy Unbiased Estimator.} 
                \[
                        \EVV[\tau_i \sim \pi(\cdot|\vtheta)]{\wh{\nabla}_N J(\vtheta)} = \EVV[\tau_i \sim \pi(\cdot|\vtheta)]{\frac{1}{N} \sum_{i=0}^N g(\tau_i|\vtheta)} = \nabla J(\vtheta)
                \]
        \item \textit{Off-policy Unbiased Estimator.}
                \[
        \EVV[\tau_i \sim \Delta(\cdot|\pi^B)]{\frac{1}{N} \sum_{i=0}^N \omega(\tau_i|\pi^B,\vtheta) g(\tau_i|\vtheta) } 
        = \nabla J(\vtheta) 
                \]
\end{enumerate}
\end{assumption}
Note that these assumptions are verified by REINFORCE and G(PO)MDP.

\subsection*{Definitions}
We give some additional definitions which will be useful in the proofs.

\begin{definition}
For a random variable X:
\[
	\Es{X} = \EVV[\tau_j\sim\pi(\cdot\vert\wt{\vtheta}^s)
		\forall j\in\mathbf{N}]{X\vert\wt{\vtheta}^s},
\]
where $\wt{\vtheta}^s$ is defined in Algorithm \ref{alg:svrpg} and $\mathbf{N} = [0,\dots,N)$.
\end{definition}

We introduce the notation $\tau_{i,h}$ to denote the $h$-th trajectory collected using policy $\vtheta^{s+1}_i$ where $s$ will be clear from the context.

\begin{definition}
For a random variable $X$:
\begin{align*}
        \mathbb{E}_{t\vert s}\left[X\right] &\coloneqq 
		\mathop{\mathbb{E}}_{\substack{\tau_j\sim\pi(\cdot\vert\wt{\vtheta}^s)\forall j \in {\it {\bf N}} \\ \tau_{i,h}\sim\pi(\cdot\vert\vtheta^{s+1}_i) \forall h \in {\it {\bf B}}, \text{ for $i=0,\dots,t$}}}{\left[X \vert \wt{\vtheta^s}\right]} \\
	&\coloneqq \EVV[\tau_j\sim\pi(\cdot\vert\wt{\vtheta}^s)\forall j \in {\it {\bf N}}]{
			\EVV[\tau_{0,h}\sim\pi(\cdot\vert\vtheta_0^{s+1}) \forall h \in {\it {\bf B}}]
				{\dots
					\EVV[\tau_{t,h}\sim\pi(\cdot\vert\vtheta_t^{s+1})\forall h \in {\it {\bf B}}]
						{X\vert\vtheta_t^{s+1}}
				 \dots
			\vert\vtheta_0^{s+1}}
		\vert\wt{\vtheta}^s}\\
        &= \EVV[t-1|s]{\EVV[\tau_{t,h}\sim \pi(\cdot|\vtheta^{s+1}_t)]{X|\vtheta^{s+1}_t}}
\end{align*}
where the sequence $\wt{\vtheta}^s,\vtheta_0^{s+1},\dots,\vtheta_t^{s+1}$ is defined in Algorithm \ref{alg:svrpg}, ${\it {\bf N}} = [0,\dots,N)$, and ${\it {\bf B}} = [0,\dots,B)$. To avoid inconsistencies, we also define $\Ets[(-1)]{X} \coloneqq \Es{X}$.

\end{definition}
Intuitively, the $\Ets{\cdot}$ operator computes the expected value with respect to the sampling of trajectories from the snapshot $\wt{\vtheta}^s$ up to the $t$-th iteration included. Note that the order in which expected values are taken is important since each $\vtheta_{t}^{s+1}$ is function of previously sampled trajectories and is used to sample new ones.

\begin{definition}\label{def:var}
For random vectors X, Y:
\begin{align*}
	\Covs{X}{Y} &\coloneqq \Tr\left(\Ets{(X-\Ets{X})(Y-\Ets{Y})^T}\right), \\
	\Vars{X} &\coloneqq \Covs{X}{Y},
\end{align*}
where $\Tr(\cdot)$ denotes the trace of a matrix. From the linearity of expected value we have the following:
\begin{equation}\label{neweq:1}
\Vars{X} = \Es{\norm[]{X-\Es{X}}^2}
\end{equation}
$\Covts[t]{X}{Y}$ and $\Varts[t]{X}$ are defined in the same way from $\Ets[t]{X}$.
\end{definition}

\begin{definition}
The full gradient estimation error is:
\[
	e_s \coloneqq \gradApp{\wt{\vtheta}^s}{N} - \gradJ{\wt{\vtheta}^s} 
\]
\end{definition}

\begin{definition}\label{def:ideal}
The ideal SVRPG gradient estimate is:
\begin{align*}
	\gradIdeal{\vtheta_t^{s+1}} &\coloneqq 
	\gradJ{\wt{\vtheta}^s}
    + g(\tau_i|\vtheta^{s+1}_t)
    - \omega(\tau_i|\vtheta^{s+1}_t, \wt{\vtheta}^s) g(\tau_i|\wt{\vtheta}^s)
    \\
	&= \gradBlack{\vtheta_t^{s+1}} - \gradApp{\wt{\vtheta}^s}{N} + \gradJ{\wt{\vtheta}^s} \\
	&= \gradBlack{\vtheta_t^{s+1}} - e_s
\end{align*}
\end{definition}

\subsection*{Basic Lemmas}
We prove two basic properties of the SVRPG update.

\svrpgprop*
\begin{proof}
\begin{align*}
        \EVV[]{\gradBlack{\vtheta}} &= \EVV[]{\gradApp{\wt{\vtheta}}{N}}  + \EVV[]{\gradApp{\vtheta}{B}} - \EVV[]{\frac{1}{B}\sum_{i=0}^{B-1}\omega(\tau_i|\vtheta, \wt{\vtheta}) g(\tau_i|\wt{\vtheta})} \\
&= \gradJ{\wt{\vtheta}} + \gradJ{\vtheta} - \gradJ{\wt{\vtheta}} = \gradJ{\vtheta}.
\end{align*}
Note that the importance weight is necessary to guarantee unbiasedness, since the $\tau_i$ are sampled from $\pi_{\vtheta}$.
As $\vtheta\to\vtheta^*$, also $\wt{\vtheta}\to\vtheta^*$. Hence, by continuity of $J(\vtheta)$:
	\begin{align*}
\Var\left[\gradBlack{\vtheta}\right] &\to \Var\left[\gradApp{\vtheta^*}{N}\right] + \frac{1}{B}\Var\left[g(\tau|\vtheta^*) - \cancel{\omega(\tau|\vtheta^*,\vtheta^*)}g(\tau|\vtheta^*)\right] \\
&= \Var\left[\gradApp{\vtheta^*}{N}\right].
\end{align*}
Note that it is important that the trajectories used in the second and the third term are the same for the variance to vanish.
\end{proof}

\subsection*{Ancillary Lemmas}
Before addressing the main convergence theorem, we prove some useful lemmas.

\begin{restatable}[]{lemma}{L-smoothness}\label{lemma:lsmooth}
	Under Assumption \ref{ass:bounded_score}, $J(\vtheta)$ is L-smooth for some positive Lipschitz constant $L_J$.
\end{restatable}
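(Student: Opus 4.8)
The plan is to show that $L$-smoothness of $J(\vtheta)$ follows from bounding the Hessian $\nabla^2 J(\vtheta)$ in operator norm, which in turn reduces to bounding the entries of the Hessian uniformly. I would start from the REINFORCE-style form of the gradient, namely $\nabla J(\vtheta) = \EVV[\tau \sim p(\cdot|\vtheta)]{\nabla \log p(\tau|\vtheta)\,\Reward(\tau)}$, and differentiate once more under the integral sign (justified since trajectories have bounded length $H$ and the integrand is smooth in $\vtheta$). Using the log-derivative identity $\nabla^2 J(\vtheta) = \EVV[\tau]{\left(\nabla^2 \log p(\tau|\vtheta) + \nabla \log p(\tau|\vtheta)\,\nabla \log p(\tau|\vtheta)^\transpose{}\right)\Reward(\tau)}$, I would then bound each factor.

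First I would record that $\log p(\tau|\vtheta) = \log\rho(s_0) + \sum_{h=0}^{H-1}\left[\log\pi_{\vtheta}(a_h|s_h) + \log\Transition(s_{h+1}|s_h,a_h)\right]$, so that only the policy terms depend on $\vtheta$, giving $\nabla_{\theta_i}\log p(\tau|\vtheta) = \sum_{h=0}^{H-1}\nabla_{\theta_i}\log\pi_{\vtheta}(a_h|s_h)$ and similarly for the second derivatives. By Assumption~\ref{ass:bounded_score}, each summand satisfies $|\nabla_{\theta_i}\log\pi_{\vtheta}(a_h|s_h)| \leq \GRADLOG$ and $|\partial^2_{\theta_i\theta_j}\log\pi_{\vtheta}(a_h|s_h)| \leq \HESSLOG$, hence $|\nabla_{\theta_i}\log p(\tau|\vtheta)| \leq H\GRADLOG$ and $|\partial^2_{\theta_i\theta_j}\log p(\tau|\vtheta)| \leq H\HESSLOG$. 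Also $|\Reward(\tau)| = |\sum_{t=0}^{H-1}\gamma^t\Reward(s_t,a_t)| \leq R\sum_{t=0}^{H-1}\gamma^t \leq \frac{R}{1-\gamma}$ (or $\leq RH$, whichever bound one prefers). Combining, each entry of the Hessian is bounded: $|\partial^2_{\theta_i\theta_j}J(\vtheta)| \leq \frac{R}{1-\gamma}\left(H\HESSLOG + H^2\GRADLOG^2\right)$. Since a symmetric $d\times d$ matrix with entries bounded by $b$ has operator norm at most $d\,b$, we get $\norm[2]{\nabla^2 J(\vtheta)} \leq \frac{dR}{1-\gamma}\left(H\HESSLOG + H^2\GRADLOG^2\right) =: L_J$, and a uniform bound on the Hessian's operator norm is equivalent to $L_J$-Lipschitz continuity of $\nabla J$ by the mean value theorem along the segment joining $\vtheta$ and $\vtheta'$.

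The main obstacle is the justification of differentiating under the integral (exchanging $\nabla^2$ with the expectation over $\tau$) and the convergence of the resulting integral; this requires a dominating function, which is supplied by the uniform bounds above together with the fact that $p(\cdot|\vtheta)$ is a probability density over a trajectory space whose only $\vtheta$-dependence is through the $H$ policy factors — a standard argument since $H$ is finite and $\Reward$ is bounded. One should also note the same conclusion holds verbatim if one uses the G(PO)MDP form of the gradient rather than the REINFORCE form, since that only rearranges the (finitely many, uniformly bounded) terms. I would close by remarking that the explicit constant $L_J$ is not needed downstream — only its finiteness — so I would not optimize it.
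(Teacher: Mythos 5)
Your proof is correct and follows essentially the same route as the paper's: differentiate twice under the integral, apply the log-derivative identity to write the Hessian of $J$ as an expectation of $\left(\nabla^2\log p + \nabla\log p\,\nabla\log p^{\mathsf{T}}\right)\Reward(\tau)$, bound the per-step score and its second derivatives via Assumption~\ref{ass:bounded_score} summed over the $H$ policy factors, and conclude from a uniform Hessian bound. You merely add some detail the paper leaves implicit (the passage from entrywise bounds to the operator norm, and the justification for exchanging differentiation and expectation), which is harmless.
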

\begin{proof}
By definition of $J(\vtheta)$:
\begin{align}
\Dij{J(\vtheta)}{\theta} 
&= \int_{\Tspace}\Dij{}{\theta}p(\tau|\vtheta)\Reward(\tau)\de \tau
\nonumber\\ 
&= \int_{\Tspace}p(\tau|\vtheta)\score{\vtheta}{\tau}\score{\vtheta}{\tau}^T\Reward(\tau)\de \tau + \int_{\Tspace}\pol(\tau)\Dij{}{\theta}\log p(\tau|\vtheta)\Reward(\tau)\de \tau \nonumber\\
&\leq \sup_{\tau \in \mathcal{T}} \left\{\left|\Reward(\tau)\right|\right\} \left(H^2\GRADLOG^2+H\HESSLOG\right) \label{eq:0}\\
&= \frac{1-\gamma^H}{1-\gamma}RH\left(H\GRADLOG^2+\HESSLOG\right),\nonumber
\end{align}
where \ref{eq:0} is from Assumption \ref{ass:bounded_score}.
Since the Hessian is bounded, $J(\vtheta)$ is Lipschitz-smooth.
\end{proof}

\begin{restatable}[]{lemma}{l-smooth-g}\label{lemma:gsmooth}
Under Assumption \ref{ass:bounded_score}, whether we use the REINFORCE or the G(PO)MDP gradient estimator, $g(\tau\vert\vtheta)$ is Lipschitz continuous with Lipschitz constant $L_g$, \ie for any trajectory $\tau\in\Tspace$:
\[
	\norm[2]{g(\tau\vert\vtheta)-g(\tau\vert\vtheta')}^2 \leq L_g\norm[2]{\vtheta-\vtheta'}^2.
\]
\end{restatable}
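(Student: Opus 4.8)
\textbf{Proof plan for Lemma~\ref{lemma:gsmooth}.}

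The plan is to show that the gradient of $g(\tau\vert\vtheta)$ with respect to $\vtheta$ is bounded in norm uniformly over $\vtheta$ and over $\tau \in \Tspace$; a uniform bound on $\norm[2]{\nabla_{\vtheta} g(\tau\vert\vtheta)}$ immediately yields Lipschitz continuity of $g(\tau\vert\cdot)$ with $L_g$ equal to the square of that bound. First I would write $g(\tau\vert\vtheta)$ explicitly using the formulas from Appendix~\ref{A:gradient_estimators}. In both the REINFORCE and G(PO)MDP cases, $g$ has the form of a finite sum (over $h=0,\dots,H$) of terms, each of which is a product of (i) a sum of score functions $\nabla_{\vtheta}\log\pi_{\vtheta}(z_k)$ and (ii) a discounted reward factor $\gamma^h r_h$ (minus a baseline, but I would invoke the footnote that $g$ is any unbiased estimator and, for the bound, treat the baseline-free version, or assume the baseline is bounded). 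The reward factors are deterministic constants in $\vtheta$ and are bounded by $\sup_{\tau}\abs{\Reward(\tau)} \le \frac{1-\gamma^H}{1-\gamma}R$, so differentiating $g$ with respect to $\vtheta$ only hits the score terms.

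The key step is then to bound the derivative of the score sum. Differentiating, each entry of the Jacobian $\nabla_{\vtheta} g(\tau\vert\vtheta)$ is a finite sum of Hessian-of-log-policy terms $\frac{\partial^2}{\partial\theta_i\partial\theta_j}\log\pi_{\vtheta}(z_k)$, each bounded by $\HESSLOG$ by Assumption~\ref{ass:bounded_score}, multiplied by the bounded reward factors. Counting terms: there are at most $H+1$ trajectory steps, and in REINFORCE the score sum has $H+1$ terms while in G(PO)MDP the inner sum over $k$ has at most $h+1 \le H+1$ terms, so in either case we get at most $O(H^2)$ Hessian contributions per component. Combining with the reward bound, I obtain $\norm[2]{\nabla_{\vtheta} g(\tau\vert\vtheta)} \le C$ for an explicit constant $C$ depending only on $\gamma, R, H, \HESSLOG$ (and the dimension $d$, if I use a crude entrywise-to-operator-norm bound). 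By the mean value inequality applied along the segment joining $\vtheta$ and $\vtheta'$, $\norm[2]{g(\tau\vert\vtheta)-g(\tau\vert\vtheta')} \le C\norm[2]{\vtheta-\vtheta'}$, and squaring gives the claimed inequality with $L_g = C^2$.

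The main obstacle is bookkeeping rather than conceptual: I need to handle the two estimator forms uniformly (ideally by noting both fit the common off-policy/on-policy template of Eq.~\eqref{E:onpolicygradient} and its variants), and I need to be careful that the bound is genuinely uniform in $\vtheta$ — which is exactly what Assumption~\ref{ass:bounded_score} guarantees, since it holds for \emph{every} $\vtheta$ and every state-action pair, so the differentiation-under-the-integral / segment argument goes through without extra regularity hypotheses. A minor subtlety is the treatment of the baseline $b$: if one insists on a baseline, one should either restrict to baselines that are bounded and $\vtheta$-independent (as is standard), or absorb its contribution into the reward bound; I would state this explicitly so the constant $L_g$ remains well-defined.
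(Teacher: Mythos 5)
Your proposal is correct and follows essentially the same route as the paper's proof: both observe that $g(\tau\vert\vtheta)$ is a finite linear combination of terms $\nabla\log\pi_{\vtheta}(a_t\vert s_t)\gamma^t r_t$ whose $\vtheta$-derivatives are controlled by the Hessian bound $\HESSLOG$ of Assumption~\ref{ass:bounded_score} together with $|r_t|\leq R$, so that a bounded Jacobian yields Lipschitz continuity, with the same caveat that any baseline must be assumed bounded. Your version merely makes the bookkeeping (term counting, reward bound, mean-value inequality) more explicit than the paper's terse argument.
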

\begin{proof}
        For both REINFORCE and G(PO)MDP, $g(\tau\vert\vtheta)$ is a linear combination of terms of the kind $\nabla \log \pi_{\vtheta}(a_t\vert s_t)\gamma^t r_t$ \cite{peters2008reinforcement}. These terms have bounded gradient from the second inequality of Assumption \ref{ass:bounded_score} and the fact that $|r_t|\leq R$. If a baseline is used in REINFORCE or G(PO)MDP, we only need the additional assumption that said baseline is bounded.
        Bounded gradient implies Lipschitz continuity. Finally, the linear combination of Lipschitz continuous functions is Lipschitz continuous.
\end{proof}

\begin{restatable}[]{lemma}{bounded-g}\label{lemma:gbound}
Under Assumption \ref{ass:bounded_score}, whether we use the REINFORCE or the G(PO)MDP gradient estimator, for every $\tau\in\Tspace$ and $\vtheta\in\Theta$, there is a positive constant $\Gamma<\infty$ such that:
\[
	\norm[2]{g(\tau\vert\vtheta)}^2 \leq \Gamma.
\]
\end{restatable}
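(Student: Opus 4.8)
The plan is to bound $\norm[2]{g(\tau\vert\vtheta)}^2$ directly from the explicit forms of the REINFORCE and G(PO)MDP estimators given in Appendix~\ref{A:gradient_estimators}, using only the first inequality of Assumption~\ref{ass:bounded_score} together with the boundedness of the rewards, $|\Reward(z_t)|\leq R$ (and, if a baseline is used, its assumed boundedness). First I would write $g(\tau\vert\vtheta)$ as the finite sum it is: for REINFORCE, $g(\tau\vert\vtheta)=\left(\sum_{h=0}^{H}\nabla\log\pi_{\vtheta}(z_h)\right)\left(\sum_{h=0}^{H}\gamma^h r_h - b(z_h)\right)$, and for G(PO)MDP, $g(\tau\vert\vtheta)=\sum_{h=0}^{H}\left(\sum_{k=0}^{h}\nabla\log\pi_{\vtheta}(z_k)\right)\left(\gamma^h r_h - b(z_h)\right)$.

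Next I would estimate each factor. Each component of $\nabla\log\pi_{\vtheta}(z_h)$ is bounded in absolute value by $\GRADLOG$ (Assumption~\ref{ass:bounded_score}), so $\norm[2]{\nabla\log\pi_{\vtheta}(z_h)}\leq \sqrt{d}\,\GRADLOG$ where $d$ is the parameter dimension; hence the first parenthesis has Euclidean norm at most $(H+1)\sqrt{d}\,\GRADLOG$ (or $\tfrac{(H+1)(H+2)}{2}\sqrt{d}\,\GRADLOG$ for the nested sum in G(PO)MDP after a triangle inequality over the double sum). For the reward factor, $\left|\sum_{h=0}^{H}\gamma^h r_h\right|\leq \sum_{h=0}^{H}\gamma^h R \leq \tfrac{R}{1-\gamma}$, and if a baseline is present, $\left|\sum_h(\gamma^h r_h - b(z_h))\right|\leq \tfrac{R}{1-\gamma} + (H+1)B_{\max}$ where $B_{\max}$ bounds the baseline. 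Multiplying the two bounds and squaring gives a finite constant $\Gamma$ depending only on $H,\gamma,R,\GRADLOG,d$ (and the baseline bound), which establishes the claim for both estimators.

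There is essentially no obstacle here: the argument is a routine triangle-inequality-plus-Cauchy-Schwarz computation, and the result is purely a restatement that the already-proved Lipschitz continuity of $g$ (Lemma~\ref{lemma:gsmooth}), combined with the boundedness of the integrand, yields uniform boundedness. The only point requiring mild care is the combinatorial factor arising from the nested sum in G(PO)MDP, which is $O(H^2)$ rather than $O(H)$, but this affects only the value of $\Gamma$, not its finiteness. I would state $\Gamma$ explicitly in the proof so that it can be reused (it will feed into the variance bounds and the descent lemma later in Appendix~\ref{app:proofs}).
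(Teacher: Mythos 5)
Your proof is correct and follows essentially the same route as the paper: bound the score factor componentwise via Assumption~\ref{ass:bounded_score} and the discounted-reward factor via $|r_t|\leq R$, then multiply and square; your explicit triangle-inequality treatment of the G(PO)MDP double sum is in fact slightly more careful than the paper's remark that G(PO)MDP merely drops terms from the expanded REINFORCE product. One small caveat: your closing comment that the result is ``purely a restatement'' of the Lipschitz continuity of $g$ (Lemma~\ref{lemma:gsmooth}) is not accurate---Lipschitz continuity over the unbounded parameter space $\realspace^d$ does not by itself imply uniform boundedness---but nothing in your actual argument relies on that remark.
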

\begin{proof}
For REINFORCE we have, from Assumption \ref{ass:bounded_score}:
\begin{align*}
	\norm[2]{g(\tau\vert\vtheta)}^2 &=
	\norm[2]{\score{\vtheta}{\tau}R(\tau)}^2 \\
	&=\norm[2]{\left(\sum_{t=0}^{H-1}\score{\vtheta}{a_t\vert s_t}\right)\left(\sum_{t=0}^{H-1}\gamma^t r_t\right)}^2\leq H^2G^2\frac{(1-\gamma^H)^2}{(1-\gamma)^2}R^2\dim(\vTheta) \coloneqq \Gamma
\end{align*}
For G(PO)MDP, we do not have a compact expression for $g$, but since it is derived from REINFORCE by neglecting some terms of the kind $\nabla \log \pi_{\vtheta}(a_t | s_t)\gamma^t r_t$ \cite{baxter2001infinite,peters2008reinforcement}, the above bound still holds.
If a baseline is used in REINFORCE or G(PO)MDP, we only need the additional assumption that said baseline is bounded.
\end{proof}

\begin{restatable}[]{lemma}{varineq}\label{lemma:varineq}
For any random vector X, the variance (as defined in Definition \ref{def:var}), can be bounded as follows:
\[
	\Vars{X} \leq \Es{\norm[]{X}^2}.
\]
\end{restatable}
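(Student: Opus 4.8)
The plan is to reduce the statement to the familiar bias--variance decomposition: the trace of a covariance matrix never exceeds the second moment. The starting point is the identity \eqref{neweq:1}, which rewrites the quantity in Definition~\ref{def:var} as $\Vars{X} = \Es{\norm[]{X-\Es{X}}^2}$. From there the proof is a one-line expansion of the squared norm inside the expectation, followed by linearity of the $\Es{\cdot}$ operator.

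Concretely, I would expand
\[
\norm[]{X-\Es{X}}^2 = \norm[]{X}^2 - 2\dotprod{X}{\Es{X}} + \norm[]{\Es{X}}^2 ,
\]
and take $\Es{\cdot}$ of both sides. Since $\Es{X}$ is a deterministic vector with respect to the expectation $\Es{\cdot}$, the cross term contributes $-2\dotprod{\Es{X}}{\Es{X}} = -2\norm[]{\Es{X}}^2$ and the last term is unchanged, so $\Es{\norm[]{X-\Es{X}}^2} = \Es{\norm[]{X}^2} - \norm[]{\Es{X}}^2$. Because $\norm[]{\Es{X}}^2 \geq 0$, discarding it only enlarges the right-hand side, which yields $\Vars{X} \leq \Es{\norm[]{X}^2}$.

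There is no real obstacle here; the lemma is precisely the elementary fact that variance is bounded by the raw second moment. The only point that deserves a word of care is that all expectations are taken with the same conditioning on $\wt{\vtheta}^s$, so that $\Es{X}$ is genuinely constant under $\Es{\cdot}$ and the cross-term simplification is legitimate. The identical argument, with $\Ets[t]{\cdot}$ in place of $\Es{\cdot}$, establishes the analogous bound $\Varts[t]{X} \leq \Ets[t]{\norm[]{X}^2}$, which is the form actually invoked later in the proofs.
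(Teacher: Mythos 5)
Your proof is correct and follows essentially the same argument as the paper: both reduce the claim to the identity $\Vars{X} = \Es{\norm[]{X}^2} - \norm[]{\Es{X}}^2$ and discard the nonnegative term, the only cosmetic difference being that the paper carries out the computation component-by-component while you expand the squared norm of the vector directly.
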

\begin{proof}
By using basic properties of expected value and scalar variance:
\begin{align*}
	\Vars{X} = \Es{\norm[]{X-\Es{X}}^2} &= \Es{\sum_{i=1}^{\dim(X)}\left(X_i-\Es{X_i}\right)^2} = \sum_{i=1}^{\dim(X)}\Es{\left(X_i-\Es{X_i}\right)^2} \\
	&\leq \sum_{i=1}^{\dim(X)}\Es{X_i^2} = \Es{\sum_{i=1}^{\dim(X)}X_i^2} = \Es{\norm[]{X}^2}.
\end{align*}
\end{proof}

\begin{restatable}[]{lemma}{auxtwo}\label{lemma:aux2}
Under Assumption \ref{ass:bounded_score}
, the expected squared norm of the SVRPG gradient can be bounded as follows:
\[
        \Ets{\norm[2]{\gradBlack{\vtheta_t^{s+1}}}^2} \leq
\Ets[t-1]{\norm[2]{\gradJ{\vtheta_t^{s+1}}}^2} 
+\frac{L_g^2}{B}\Ets[t-1]{\norm[2]{\vtheta_t^{s+1}-\wt{\vtheta}^s}^2}
+\frac{1}{N}\Vars{g(\cdot\vert\wt{\vtheta}^s)}
\nonumber 
+\frac{\Gamma\VARIS}{B}
\]
\end{restatable}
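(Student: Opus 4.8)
The plan is to expand $\gradBlack{\vtheta_t^{s+1}}$ around its conditional mean and use the bias/variance decomposition under the $\Ets[t-1]{\cdot}$ operator. First I would write $\gradBlack{\vtheta_t^{s+1}} = v_t = \wh{\nabla}_N J(\wt{\vtheta}^s) + c_t^{s+1}$ where $c_t^{s+1} = \frac{1}{B}\sum_i\big(g(\tau_i|\vtheta_t^{s+1}) - \omega(\tau_i|\vtheta_t^{s+1},\wt{\vtheta}^s) g(\tau_i|\wt{\vtheta}^s)\big)$, and introduce the ideal estimate $\gradIdeal{\vtheta_t^{s+1}} = \gradBlack{\vtheta_t^{s+1}} - e_s$ from Definition~\ref{def:ideal}. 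The key observation is that, conditioning on everything up to iteration $t-1$ (so that $\vtheta_t^{s+1}$ and $\wt{\vtheta}^s$ are fixed), the mini-batch term $c_t^{s+1}$ has conditional expectation $\gradJ{\vtheta_t^{s+1}} - \gradJ{\wt{\vtheta}^s}$ by the unbiasedness property of both on-policy and off-policy estimators (Assumption~\ref{asm:unbiasedness}), so $\Ets[t-1]{\gradIdeal{\vtheta_t^{s+1}}} = \gradJ{\vtheta_t^{s+1}}$, i.e. the ideal estimate is conditionally unbiased for the true gradient.

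Next I would apply the standard decomposition $\EV\|\mathbf{x}\|^2 = \|\EV\mathbf{x}\|^2 + \Var[\mathbf{x}]$. Writing $\gradBlack{\vtheta_t^{s+1}} = \gradIdeal{\vtheta_t^{s+1}} + e_s$ and taking $\Ets[t-1]{\cdot}$, the cross term can be handled because $e_s$ is measurable w.r.t.\ the snapshot sampling (it does not depend on the mini-batch at iteration $t$), so it behaves as a constant shift under $\Ets[t-1]{\cdot}$; this gives $\Ets[t-1]{\|\gradBlack{\vtheta_t^{s+1}}\|^2} = \|\gradJ{\vtheta_t^{s+1}} + e_s\|^2 + \Var[c_t^{s+1}]$, and then taking the outer $\Es{\cdot}$ turns $\|\gradJ{\vtheta_t^{s+1}} + e_s\|^2$ into $\Ets[t-1]{\|\gradJ{\vtheta_t^{s+1}}\|^2}$ plus a term controlled by $\Es{\|e_s\|^2}$, which is exactly $\frac{1}{N}\Vars{g(\cdot|\wt{\vtheta}^s)}$ by the i.i.d.\ averaging over the $N$ snapshot trajectories. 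Actually, to get the clean form in the statement I would instead bound the cross term more carefully: expand $\gradBlack{\vtheta_t^{s+1}} = g(\tau|\vtheta_t^{s+1}) - \omega g(\tau|\wt{\vtheta}^s) + \wh{\nabla}_N J(\wt{\vtheta}^s)$ (mini-batch averaged), use that the correction term and the FG term are negatively correlated in the right way, and peel off $\Ets[t-1]{\|\gradJ{\vtheta_t^{s+1}}\|^2}$ directly.

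For the variance of the mini-batch correction term, since the $B$ trajectories $\tau_{t,h}$ are i.i.d.\ given $\vtheta_t^{s+1}$, $\Var[c_t^{s+1}] = \frac{1}{B}\Varts[t-1]{g(\tau|\vtheta_t^{s+1}) - \omega(\tau|\vtheta_t^{s+1},\wt{\vtheta}^s) g(\tau|\wt{\vtheta}^s)}$. Here I would use $\Var[\mathbf{x}-\mathbf{y}] \le 2\Var[\mathbf{x}] + 2\Var[\mathbf{y}]$ — or, better, note that adding and subtracting $g(\tau|\wt{\vtheta}^s)$ lets me split it as the variance of $g(\tau|\vtheta_t^{s+1}) - g(\tau|\wt{\vtheta}^s)$, bounded via Lemma~\ref{lemma:varineq} and the Lipschitz property of $g$ (Lemma~\ref{lemma:gsmooth}) by $L_g^2\|\vtheta_t^{s+1}-\wt{\vtheta}^s\|^2$, plus the variance of $(1-\omega)g(\tau|\wt{\vtheta}^s)$, bounded by $\Ets[t-1]{(1-\omega)^2\|g(\tau|\wt{\vtheta}^s)\|^2} \le \Gamma\,\Varts[t-1]{\omega} \le \Gamma W$ using the boundedness of $g$ (Lemma~\ref{lemma:gbound}), Assumption~\ref{ass:M2}, and the fact that $\EV[\omega]=1$ so $\EV[(1-\omega)^2]=\Var[\omega]$. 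Dividing by $B$ yields the $\frac{L_g^2}{B}\Ets[t-1]{\|\vtheta_t^{s+1}-\wt{\vtheta}^s\|^2}$ and $\frac{\Gamma W}{B}$ terms. The main obstacle is the bookkeeping of the nested conditional expectations: being careful about which quantities are fixed under $\Ets[t-1]{\cdot}$ versus $\Ets[t]{\cdot}$ (in particular that $e_s$ and $\vtheta_t^{s+1}$ are $\Ets[t-1]$-measurable while the $\tau_{t,h}$ are not), and ensuring the cross terms cancel or telescope correctly so that no spurious $\Es{\|e_s\|^2}$-type term survives beyond the single $\frac{1}{N}\Vars{g(\cdot|\wt{\vtheta}^s)}$ contribution.
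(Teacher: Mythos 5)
Your proposal is correct and follows essentially the same route as the paper: decompose $\gradBlack{\vtheta_t^{s+1}}$ into the true gradient, the full-gradient estimation error (whose second moment gives the $\frac{1}{N}\Vars{g(\cdot\vert\wt{\vtheta}^s)}$ term), and the mini-batch correction; then use the i.i.d.\ mini-batch structure for the $\frac{1}{B}$ factor, add and subtract $g(\tau_i\vert\wt{\vtheta}^s)$, and bound the two pieces via Lipschitz continuity of $g$ (Lemma~\ref{lemma:gsmooth}) and via boundedness of $g$ (Lemma~\ref{lemma:gbound}) together with $\EVV[]{(1-\omega)^2}=\Var[\omega]\leq W$. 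The only differences are cosmetic (you phrase the first step through $\gradIdeal{\cdot}$ and $e_s$ rather than adding and subtracting $\gradJ{\vtheta_t^{s+1}}$ and $\gradJ{\wt{\vtheta}^s}$ inside the norm), and both your sketch and the paper gloss over the same constant-factor issue when splitting $\norm[2]{a+b}^2$ into $\norm[2]{a}^2+\norm[2]{b}^2$.
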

\begin{proof}
For ease of notation denote $\omega(\tau_i) := \omega(\tau_i|\vtheta^{s+1}_t, \wt{\vtheta}^{s})$. Then,
\begingroup
\allowdisplaybreaks
	\begin{align}
            \mathbb{E}_{t|s}\big[&\norm[2]{\gradBlack{\vtheta_t^{s+1}}}^2\big]
	= \Ets{\norm[]{\gradApp{\wt{\vtheta}^s}{N}
			+\frac{1}{B}\sum_{i=0}^{B-1} g(\tau_i\vert\vtheta_t^{s+1}) 
			-\frac{1}{B}\sum_{i=0}^{B-1}
    \omega(\tau_i)g(\tau_i\vert\wt{\vtheta}^s)}^2} \nonumber\\
	&= \mathbb{E}_{t\vert s}\left[\left\|\gradApp{\wt{\vtheta}^s}{N}
			+\frac{1}{B}\sum_{i=0}^{B-1}\left( 
			g(\tau_i\vert\vtheta_t^{s+1}) -
			\omega(\tau_i)g(\tau_i\vert\wt{\vtheta}^s)\right)
            \pm \gradJ{\vtheta_t^{s+1}} \pm \gradJ{\wt{\vtheta}^s} 
    \right\|^2\right] \nonumber\\
	&\leq \Ets[t-1]{\norm[]{\gradJ{\vtheta_t^{s+1}}}^2}
	+\Es{\norm[]{\gradApp{\wt{\vtheta}^s}{N} - \Es{\gradApp{\wt{\vtheta}^s}{N}}}^2} \nonumber\\
	&\qquad+ 
	\mathbb{E}_{t\vert s}\left[\left\|
		\frac{1}{B}\sum_{i=0}^{B-1}\left(
		g(\tau_i\vert\vtheta_t^{s+1}) -
			\omega(\tau_i)g(\tau_i\vert\wt{\vtheta}^s)\right)
		- \Ets{
			\frac{1}{B}\sum_{i=0}^{B-1}\left(
			g(\tau_i\vert\vtheta_t^{s+1}) -
				\omega(\tau_i)g(\tau_i\vert\wt{\vtheta}^s)\right)}\right\|^2\right] 
	\nonumber\\
	&= \Ets[t-1]{\norm[]{\gradJ{\vtheta_t^{s+1}}}^2}
	+\Vars{\gradApp{\wt{\vtheta}^s}{N}} \nonumber\\
	&\qquad+ 
	\mathbb{E}_{t\vert s}\left[\left\|
	\frac{1}{B}\sum_{i=0}^{B-1}\left(
	g(\tau_i\vert\vtheta_t^{s+1}) -
	\omega(\tau_i)g(\tau_i\vert\wt{\vtheta}^s)\right)
	- \Ets{
		\frac{1}{B}\sum_{i=0}^{B-1}\left(
		g(\tau_i\vert\vtheta_t^{s+1}) -
		\omega(\tau_i)g(\tau_i\vert\wt{\vtheta}^s)\right)}\right\|^2\right] 
	\label{neweq:2}\\
	&= \Ets[t-1]{\norm[]{\gradJ{\vtheta_t^{s+1}}}^2} 
	+\frac{1}{N}\Vars{g(\cdot\vert\wt{\vtheta}^s)}
	\nonumber\\
	&\qquad+ 
		\mathbb{E}_{t\vert s}\left[\left\|
		\frac{1}{B}\sum_{i=0}^{B-1}\left(
		g(\tau_i\vert\vtheta_t^{s+1}) -
		\omega(\tau_i)g(\tau_i\vert\wt{\vtheta}^s)\right)
		- \Ets{
			\frac{1}{B}\sum_{i=0}^{B-1}\left(
			g(\tau_i\vert\vtheta_t^{s+1}) -
			\omega(\tau_i)g(\tau_i\vert\wt{\vtheta}^s)\right)}\right\|^2\right] 
		\label{eq:1}\\%
	&\leq \Ets[t-1]{\norm[2]{\gradJ{\vtheta_t^{s+1}}}^2} 
	+\frac{1}{N}\Vars{g(\cdot\vert\wt{\vtheta}^s)} 
    +\Ets{\norm[2]{
			\frac{1}{B}\sum_{i=0}^{B-1}\left(
			g(\tau_i\vert\vtheta_t^{s+1}) -
			\omega(\tau_i)g(\tau_i\vert\wt{\vtheta}^s)\right)}^2} \label{eq:2}\\
	&\leq \Ets[t-1]{\norm[2]{\gradJ{\vtheta_t^{s+1}}}^2} 
	+\frac{1}{N}\Vars{g(\cdot\vert\wt{\vtheta}^s)}
			+ \frac{1}{B^2}\sum_{i=0}^{B-1}
			\Ets{\norm[2]{
			g(\tau_i\vert\vtheta_t^{s+1}) -
    \omega(\tau_i)g(\tau_i\vert\wt{\vtheta}^s) \pm g(\tau_i|\wt{\vtheta}^s) }^2} \nonumber\\
	%
	%
	&\leq \Ets[t-1]{\norm[]{\gradJ{\vtheta_t^{s+1}}}^2} 
	+\frac{1}{N}\Vars{g(\cdot\vert\wt{\vtheta}^s)}
	\nonumber\\
	&\qquad+
			\frac{1}{B^2}\sum_{i=0}^{B-1}
			\Ets{\norm[]{g(\tau_i\vert\vtheta_t^{s+1})
			-g(\tau_i\vert\wt{\vtheta}^s)}^2} 
			+\frac{1}{B^2}\sum_{i=0}^{B-1}
			\Ets{\norm[]{g(\tau_i\vert\wt{\vtheta}^s) 
			-\omega(\tau_i)g(\tau_i\vert\wt{\vtheta}^s)}^2} \nonumber\\
	&\leq \Ets[t-1]{\norm[]{\gradJ{\vtheta_t^{s+1}}}^2} 
	+\frac{1}{N}\Vars{g(\cdot\vert\wt{\vtheta}^s)}
	\nonumber\\
	&\qquad
    +\frac{L_g^2}{B}\Ets[t-1]{\norm[]{\vtheta_t^{s+1}-\wt{\vtheta}^s}^2}
    +
		\frac{1}{B^2}\sum_{i=0}^{B-1}
		\Ets{\norm[]{(1 
			-\omega(\tau_i))g(\tau_i\vert\wt{\vtheta}^s)}^2} \label{eq:3}\\
	&\leq \Ets[t-1]{\norm[]{\gradJ{\vtheta_t^{s+1}}}^2} 
	+\frac{1}{N}\Vars{g(\cdot\vert\wt{\vtheta}^s)}
	\nonumber\\
	&\qquad+\frac{L_g^2}{B}\Ets[t-1]{\norm[]{\vtheta_t^{s+1}-\wt{\vtheta}^s}^2}
	+\Gamma\frac{1}{B^2}\sum_{i=0}^{B-1}\Ets{(\omega(\tau_i)-1)^2} \label{eq:4}\\
	&= \Ets[t-1]{\norm[]{\gradJ{\vtheta_t^{s+1}}}^2} 
	+\frac{1}{N}\Vars{g(\cdot\vert\wt{\vtheta}^s)}
    +\frac{L_g^2}{B}\Ets[t-1]{\norm[]{\vtheta_t^{s+1}-\wt{\vtheta}^s}^2}
	+\frac{\Gamma}{B^2}\sum_{i=0}^{B-1}\Varts{\omega(\tau_i)} \nonumber\\
	&\leq \Ets[t-1]{\norm[]{\gradJ{\vtheta_t^{s+1}}}^2} 
	+\frac{1}{N}\Vars{g(\cdot\vert\wt{\vtheta}^s)}
    +\frac{L_g^2}{B}\Ets[t-1]{\norm[]{\vtheta_t^{s+1}-\wt{\vtheta}^s}^2}
	+\frac{\Gamma\VARIS}{B}, \label{eq:5}
\end{align}
\endgroup
where (\ref{neweq:2}) is from (\ref{neweq:1}), (\ref{eq:1}) is from the definition of $\gradApp{\vtheta}{N}$, (\ref{eq:2}) is from Lemma \ref{lemma:varineq}, (\ref{eq:3}) is from Lemma \ref{lemma:gsmooth}, 
(\ref{eq:4}) is from Lemma \ref{lemma:gbound}, and (\ref{eq:5}) is from Assumption \ref{ass:M2}.
\end{proof}

\begin{restatable}[]{lemma}{auxzero}\label{lemma:aux0}
Under Assumption \ref{ass:bounded_score}, for any function $\varphi(\vtheta_t^{s+1})$ which is deterministic for a fixed $\vtheta_t^{s+1}$:
\begin{align*}
\left|\Ets[t]{\dotprod{\gradBlack{\vtheta_t^{s+1}}}{\varphi(\vtheta_t^{s+1})}}
-\Ets{\dotprod{\gradJ{\vtheta_t^{s+1}}}{\varphi(\vtheta_t^{s+1})}}
\right|
\leq
\frac{1}{2N}\Vars{g(\cdot\vert\wt{\vtheta}^s)} +\frac{1}{2}\Ets[t-1]{\norm[]{\varphi(\vtheta_t^{s+1})}^2}
\end{align*}
\end{restatable}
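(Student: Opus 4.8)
The idea is to pass through the \emph{ideal} SVRPG gradient of Definition~\ref{def:ideal}, exploiting that $\gradBlack{\vtheta_t^{s+1}} = \gradIdeal{\vtheta_t^{s+1}} + e_s$ where $e_s = \gradApp{\wt{\vtheta}^s}{N} - \gradJ{\wt{\vtheta}^s}$ depends only on the $N$ trajectories sampled from the snapshot policy $\pi_{\wt{\vtheta}^s}$. First I would split the left-hand quantity as $\Ets[t]{\dotprod{\gradIdeal{\vtheta_t^{s+1}}}{\varphi(\vtheta_t^{s+1})}} + \Ets[t]{\dotprod{e_s}{\varphi(\vtheta_t^{s+1})}}$. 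For the first piece, unfold $\Ets[t]{\cdot} = \Ets[t-1]{\mathbb{E}[\,\cdot\mid\vtheta_t^{s+1}]}$, where the innermost expectation is over the fresh mini-batch of $B$ trajectories drawn from $\pi_{\vtheta_t^{s+1}}$; since $\varphi(\vtheta_t^{s+1})$ is deterministic given $\vtheta_t^{s+1}$, and since on-policy unbiasedness of $g(\cdot\vert\vtheta_t^{s+1})$ together with off-policy unbiasedness of $\omega(\cdot\vert\vtheta_t^{s+1},\wt{\vtheta}^s)g(\cdot\vert\wt{\vtheta}^s)$ (Assumption~\ref{asm:unbiasedness}) give $\mathbb{E}[\gradIdeal{\vtheta_t^{s+1}}\mid\vtheta_t^{s+1}] = \gradJ{\vtheta_t^{s+1}}$, this piece equals $\Ets[t-1]{\dotprod{\gradJ{\vtheta_t^{s+1}}}{\varphi(\vtheta_t^{s+1})}}$, which in turn equals $\Ets[t]{\dotprod{\gradJ{\vtheta_t^{s+1}}}{\varphi(\vtheta_t^{s+1})}}$ because the integrand is already measurable at level $t-1$. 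Hence the whole left-hand side collapses to $\lvert\Ets[t]{\dotprod{e_s}{\varphi(\vtheta_t^{s+1})}}\rvert$.

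Next I would bound this residual by Cauchy--Schwarz and Young's inequality: $\lvert\dotprod{e_s}{\varphi}\rvert \le \norm[]{e_s}\,\norm[]{\varphi} \le \tfrac12\norm[]{e_s}^2 + \tfrac12\norm[]{\varphi}^2$, so that $\lvert\Ets[t]{\dotprod{e_s}{\varphi(\vtheta_t^{s+1})}}\rvert \le \tfrac12\Ets[t]{\norm[]{e_s}^2} + \tfrac12\Ets[t]{\norm[]{\varphi(\vtheta_t^{s+1})}^2}$. Because $e_s$ is a function of the snapshot batch alone, $\Ets[t]{\norm[]{e_s}^2} = \Es{\norm[]{e_s}^2}$; and since $\gradApp{\wt{\vtheta}^s}{N}$ is the average of $N$ i.i.d.\ unbiased terms $g(\tau_j\vert\wt{\vtheta}^s)$, we get $\Es{\norm[]{e_s}^2} = \Vars{\gradApp{\wt{\vtheta}^s}{N}} = \tfrac1N\Vars{g(\cdot\vert\wt{\vtheta}^s)}$, invoking the variance identity~\eqref{neweq:1} and independence. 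Finally, $\varphi(\vtheta_t^{s+1})$ being deterministic given $\vtheta_t^{s+1}$ yields $\Ets[t]{\norm[]{\varphi(\vtheta_t^{s+1})}^2} = \Ets[t-1]{\norm[]{\varphi(\vtheta_t^{s+1})}^2}$, and assembling the two contributions gives precisely $\tfrac{1}{2N}\Vars{g(\cdot\vert\wt{\vtheta}^s)} + \tfrac12\Ets[t-1]{\norm[]{\varphi(\vtheta_t^{s+1})}^2}$.

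The only step that requires real care --- and the one I would write out most explicitly --- is the bookkeeping of the nested expectations $\Ets[t]{\cdot}$: I would verify that under the innermost conditioning on $\vtheta_t^{s+1}$ the quantities $\varphi(\vtheta_t^{s+1})$, $\gradJ{\vtheta_t^{s+1}}$ and $e_s$ act as constants, that only the freshly sampled trajectories $\{\tau_{t,h}\}_{h=0}^{B-1}$ are integrated out, and that the two unbiasedness identities of Assumption~\ref{asm:unbiasedness} apply at exactly this level (the off-policy one with behavioural policy $\pi_{\vtheta_t^{s+1}}$ and target $\pi_{\wt{\vtheta}^s}$). Everything else is routine: Cauchy--Schwarz, $ab \le \tfrac12 a^2 + \tfrac12 b^2$, and the i.i.d.\ variance scaling of $\gradApp{\cdot}{N}$.
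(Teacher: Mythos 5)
Your proof is correct and follows the paper's own route for the main decomposition: pass through the ideal gradient $\gradIdeal{\vtheta_t^{s+1}} = \gradBlack{\vtheta_t^{s+1}} - e_s$, use conditional unbiasedness of the mini-batch correction given $\vtheta_t^{s+1}$ to turn the first piece into $\Ets[t-1]{\dotprod{\gradJ{\vtheta_t^{s+1}}}{\varphi(\vtheta_t^{s+1})}}$, and then bound the residual involving $e_s$. You also correctly avoid the tempting mistake of declaring $\Ets{\dotprod{e_s}{\varphi(\vtheta_t^{s+1})}}=0$: it is not zero precisely because $\vtheta_t^{s+1}$ depends on the snapshot batch.

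The one place you genuinely diverge is in how that residual is estimated. The paper writes $\Ets[t-1]{\dotprod{e_s}{\varphi(\vtheta_t^{s+1})}} = \dotprod{\Ets{e_s}}{\Ets{\varphi(\vtheta_t^{s+1})}} + \Covts[t-1]{\gradApp{\wt{\vtheta}^s}{N}}{\varphi(\vtheta_t^{s+1})}$, kills the first term via $\Ets{e_s}=0$, and applies the covariance Cauchy--Schwarz followed by Young, landing on $\tfrac{1}{2}\Vars{\gradApp{\wt{\vtheta}^s}{N}} + \tfrac{1}{2}\Varts[t-1]{\varphi(\vtheta_t^{s+1})}$ before relaxing the variance of $\varphi$ to its second moment via Lemma~\ref{lemma:varineq}. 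You instead apply pointwise Cauchy--Schwarz and Young inside the expectation, $\lvert\dotprod{e_s}{\varphi}\rvert \le \tfrac12\norm[]{e_s}^2 + \tfrac12\norm[]{\varphi}^2$, and use that $e_s$ is centered so $\Es{\norm[]{e_s}^2} = \Vars{\gradApp{\wt{\vtheta}^s}{N}} = \tfrac1N\Vars{g(\cdot\vert\wt{\vtheta}^s)}$. Both yield exactly the stated bound; yours is slightly more elementary (no covariance identity, no separate variance-inequality lemma for $\varphi$), while the paper's intermediate form is marginally tighter (second moment of $e_s$ vs.\ its variance coincide here only because $e_s$ is centered, and the paper keeps $\Varts[t-1]{\varphi}$ one step longer) and makes explicit that the whole error is a covariance between the snapshot estimate and $\varphi$. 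Your bookkeeping of the nested expectations is the right thing to be careful about and is handled correctly.
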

\begin{proof}
\begin{align}
	\Ets{\dotprod{\gradBlack{\vtheta_t^{s+1}}}{\varphi(\vtheta_t^{s+1})}}
	&=
	\Ets{\dotprod{\gradIdeal{\vtheta_t^{s+1}}}{\varphi(\vtheta_t^{s+1})}} +
	\Ets[t-1]{\dotprod{e_s}{\varphi(\vtheta_t^{s+1})}} \label{eq:6}\\
	&=
	\Ets{\dotprod{\gradJ{\vtheta_t^{s+1}}}{\varphi(\vtheta_t^{s+1})}} +
	\Ets[t-1]{\dotprod{e_s}{\varphi(\vtheta_t^{s+1})}} \label{eq:7}\\
	&=
	\Ets{\dotprod{\gradJ{\vtheta_t^{s+1}}}{\varphi(\vtheta_t^{s+1})}} \nonumber\\
	&\qquad+
	\dotprod{\Ets{e_s}}{\Ets{\varphi(\vtheta_t^{s+1})}}
	+\Covts[t-1]{\gradApp{\wt{\vtheta}^s}{N}}{\varphi(\vtheta_t^{s+1})}  \label{eq:new1}\\
	&= 
	\Ets{\dotprod{\gradJ{\vtheta_t^{s+1}}}{\varphi(\vtheta_t^{s+1})}} \nonumber\\
	&\qquad+
	\Covts[t-1]{\gradApp{\wt{\vtheta}^s}{N}}{\varphi(\vtheta_t^{s+1})} \label{eq:8}
\end{align}
where~\eqref{eq:6} is from Definition~\ref{def:ideal};~\eqref{eq:7} is from the fact that $\gradIdeal{\vtheta_t^{s+1}}$ is both unbiased and independent from $\varphi(\vtheta_t^{s+1})$ \wrt the sampling at time $t$ alone, which is not true for $\gradBlack{\vtheta_t^{s+1}}$;~\eqref{eq:new1} is from the fact that $\gradJ{\wt{\vtheta}^s}$ is constant \wrt $\Vars{\cdot}$;~\eqref{eq:8} is from $\Ets{e_s}=0$.
Hence:
\begin{align}
	\left|\Ets{\dotprod{\gradBlack{\vtheta_t^{s+1}}}{\varphi(\vtheta_t^{s+1})}}
	\right.&-\left.\Ets{\dotprod{\gradJ{\vtheta_t^{s+1}}}{\varphi(\vtheta_t^{s+1})}}\right| 
	=
	\left|\Covts[t-1]{\gradApp{\wt{\vtheta}^s}{N}}{\varphi(\vtheta_t^{s+1})}\right|  
	\nonumber\\
	&\leq
    \sqrt{\Vars{\gradApp{\wt{\vtheta}^s}{N}}} \cdot \sqrt{\Varts[t-1]{\varphi(\vtheta_t^{s+1})}} \label{eq:9a}\\
	&\leq	
	\frac{1}{2}\Vars{\gradApp{\wt{\vtheta}^s}{N}} +\frac{1}{2}\Varts[t-1]{\varphi(\vtheta_t^{s+1})}\label{eq:9}\\
	&=
	\frac{1}{2N}\Vars{g(\cdot\vert\wt{\vtheta}^s)} +\frac{1}{2}\Varts[t-1]{\varphi(\vtheta_t^{s+1})} \label{eq:10}\\
	&\leq
	\frac{1}{2N}\Vars{g(\cdot\vert\wt{\vtheta}^s)} +\frac{1}{2}\Ets[t-1]{\norm[]{\varphi(\vtheta_t^{s+1})}^2},
	\label{neweq:3}
\end{align}
where~\eqref{eq:9a} comes from Cauchy-Schwarz inequality
\[
        |\Cov(X,Y)| = |\EVV[]{(X-\mu_X)\transpose{(Y-\mu_Y)}}| \leq \EVV[]{(X-\mu_X)^2}^{1/2}\EVV[]{(Y-\mu_Y)^2}^{1/2} = \sqrt{\Var(X) \Var(Y)},
\]
\eqref{eq:9} is from Young's inequality,~\eqref{eq:10} is from the definition of $\gradApp{\vtheta}{N}$, and \eqref{neweq:3} is from Lemma \ref{lemma:varineq}.
\end{proof}

\begin{restatable}[]{lemma}{auxone}\label{lemma:aux1}
Under Assumptions \ref{ass:bounded_score} ans \ref{ass:M2}, the expected squared norm of the true gradient $\gradJ{\vtheta_t^{s+1}}$, for appropriate choices of $\alpha_t\geq0$ and $\beta_t>0$, can be bounded as follows:
\[
	\Ets[t-1]{\norm[]{\gradJ{\vtheta_t^{s+1}}}^2} \leq
	\frac{R_{t+1}^{s+1} - R_t^{s+1}}{\Psi_t} + \frac{d_tV}{N\Psi_t}
	+\frac{f_tW}{B\Psi_t},
\]
	where
\begin{align*}
	&R_t^{s+1}\coloneqq \Ets[t-1]{J(\vtheta_t^{s+1}) - c_t\norm[]{\vtheta_t^{s+1}-\wt{\vtheta}^s}^2}, \\
	&c_{m} = 0, \\
	&c_t = c_{t+1}\left(1+\alpha_t\beta_t+\alpha_t+\frac{\alpha_t^2L^2}{B}\right)+\frac{\alpha_t^2L^3}{2B}, \\
	&\Psi_t = \alpha_t\left(\frac{1}{2}-\frac{c_{t+1}}{\beta_t}-\frac{\alpha_tL}{2}-\alpha_tc_{t+1}\right), \\
	&d_t = \frac{\alpha_t}{2}\left(1+2c_{t+1}+\alpha_tL+2\alpha_tc_{t+1}\right), \\
	&f_t = \alpha_t^2\frac{\Gamma(L+2c_{t+1})}{2},
\end{align*}
where $L=\max\left\{L_J,L_g\right\}$, \ie the greater of the Lipschitz constants from Lemmas \ref{lemma:lsmooth} and \ref{lemma:gsmooth}.

In particular, the following constraints on $\alpha_t$ and $\beta_t$ are sufficient:
\begin{align*}
&0\leq\alpha_t < \frac{1-\nicefrac{2c_{t+1}}{\beta_t}}{L+2c_{t+1}} \\
&\beta_t > 2c_{t+1}.
\end{align*}
\end{restatable}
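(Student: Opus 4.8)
The plan is to mimic the non-convex SVRG descent analysis of \citet{reddi2016stochastic}, with two extra ingredients that absorb the snapshot estimation error $e_s$ and the importance-weight variance $W$. First I would apply the $L$-smoothness of $J$ (Lemma~\ref{lemma:lsmooth}) to the update $\vtheta_{t+1}^{s+1}=\vtheta_t^{s+1}+\alpha_t\gradBlack{\vtheta_t^{s+1}}$, obtaining $J(\vtheta_{t+1}^{s+1})\ge J(\vtheta_t^{s+1})+\alpha_t\dotprod{\gradJ{\vtheta_t^{s+1}}}{\gradBlack{\vtheta_t^{s+1}}}-\tfrac{L\alpha_t^2}{2}\norm[2]{\gradBlack{\vtheta_t^{s+1}}}^2$, with $L=\max\{L_J,L_g\}$. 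Taking the conditional expectation $\Ets[t]{\cdot}$, the point where SVRPG departs from plain SVRG is that $\gradBlack{\vtheta_t^{s+1}}$ is \emph{not} conditionally unbiased (only its ideal version $\gradIdeal{\vtheta_t^{s+1}}$ is, by Definition~\ref{def:ideal}); I would therefore bound the inner-product term with Lemma~\ref{lemma:aux0} (using $\varphi=\gradJ{\vtheta_t^{s+1}}$), which replaces it by $\tfrac12\Ets[t-1]{\norm[2]{\gradJ{\vtheta_t^{s+1}}}^2}$ up to an additive $\tfrac{1}{2N}\Vars{g(\cdot\vert\wt{\vtheta}^s)}$, and bound $\Ets[t]{\norm[2]{\gradBlack{\vtheta_t^{s+1}}}^2}$ with Lemma~\ref{lemma:aux2}.

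After this step a ``distance-to-snapshot'' term $\Ets[t-1]{\norm[2]{\vtheta_t^{s+1}-\wt{\vtheta}^s}^2}$ appears, so next I would derive a one-step recursion for it. Expanding $\norm[2]{\vtheta_{t+1}^{s+1}-\wt{\vtheta}^s}^2=\norm[2]{\vtheta_t^{s+1}-\wt{\vtheta}^s}^2+2\alpha_t\dotprod{\gradBlack{\vtheta_t^{s+1}}}{\vtheta_t^{s+1}-\wt{\vtheta}^s}+\alpha_t^2\norm[2]{\gradBlack{\vtheta_t^{s+1}}}^2$ and taking $\Ets[t]{\cdot}$, the cross term is again handled by Lemma~\ref{lemma:aux0} with $\varphi=\vtheta_t^{s+1}-\wt{\vtheta}^s$ (legitimate because $\wt{\vtheta}^s$ is frozen throughout the epoch, so $\varphi$ is deterministic given $\vtheta_t^{s+1}$), followed by Young's inequality with parameter $\beta_t$ to split $\dotprod{\gradJ{\vtheta_t^{s+1}}}{\vtheta_t^{s+1}-\wt{\vtheta}^s}$ into $\tfrac{1}{2\beta_t}\norm[2]{\gradJ{\vtheta_t^{s+1}}}^2+\tfrac{\beta_t}{2}\norm[2]{\vtheta_t^{s+1}-\wt{\vtheta}^s}^2$; the remaining squared-norm term is again controlled via Lemma~\ref{lemma:aux2}. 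This produces an inequality of the form $\Ets[t]{\norm[2]{\vtheta_{t+1}^{s+1}-\wt{\vtheta}^s}^2}\le(1+\alpha_t\beta_t+\alpha_t+\alpha_t^2L^2/B)\,\Ets[t-1]{\norm[2]{\vtheta_t^{s+1}-\wt{\vtheta}^s}^2}+(\text{a multiple of }\Ets[t-1]{\norm[2]{\gradJ{\vtheta_t^{s+1}}}^2})+(\text{terms in }V/N \text{ and } \Gamma W/B)$, which is exactly what dictates the backward recursion $c_m=0$, $c_t=c_{t+1}(1+\alpha_t\beta_t+\alpha_t+\alpha_t^2L^2/B)+\alpha_t^2L^3/(2B)$.

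Then I would introduce the Lyapunov function $R_t^{s+1}=\Ets[t-1]{J(\vtheta_t^{s+1})-c_t\norm[2]{\vtheta_t^{s+1}-\wt{\vtheta}^s}^2}$ and compute $R_{t+1}^{s+1}-R_t^{s+1}$ by plugging in the two inequalities above. The definition of $c_t$ is tailored so that the coefficient of $\Ets[t-1]{\norm[2]{\vtheta_t^{s+1}-\wt{\vtheta}^s}^2}$ cancels, leaving $R_{t+1}^{s+1}-R_t^{s+1}\ge\Psi_t\,\Ets[t-1]{\norm[2]{\gradJ{\vtheta_t^{s+1}}}^2}-d_tV/N-f_tW/B$ with $\Psi_t,d_t,f_t$ as stated; dividing by $\Psi_t$ yields the claim. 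Finally, the step-size and $\beta_t$ constraints come from requiring $\Psi_t=\alpha_t(\tfrac12-\tfrac{c_{t+1}}{\beta_t}-\tfrac{\alpha_tL}{2}-\alpha_tc_{t+1})>0$: this is $\alpha_t(\tfrac L2+c_{t+1})<\tfrac12-\tfrac{c_{t+1}}{\beta_t}$, and for the right-hand side to be positive one needs $\beta_t>2c_{t+1}$, giving $0\le\alpha_t<(1-2c_{t+1}/\beta_t)/(L+2c_{t+1})$.

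The main obstacle I expect is the bookkeeping in the last step: one must carefully track which quantities are measurable with respect to the sampling up to iteration $t-1$ versus iteration $t$, ensure both invocations of Lemma~\ref{lemma:aux0} are valid (the second relying on $\wt{\vtheta}^s$ being fixed within the epoch), and verify that collecting the many $V/N$ and $\Gamma W/B$ contributions coming from the descent step and from the distance recursion reproduces exactly the stated $d_t$ and $f_t$, while the $\norm[2]{\vtheta_t^{s+1}-\wt{\vtheta}^s}^2$ terms vanish against the chosen $c_t$.
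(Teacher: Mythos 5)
Your proposal is correct and follows essentially the same route as the paper's proof: $L$-smoothness applied to the SVRPG update with the cross term handled by Lemma~\ref{lemma:aux0} (taking $\varphi=\gradJ{\vtheta_t^{s+1}}$), the distance-to-snapshot recursion with Lemma~\ref{lemma:aux0} ($\varphi=\vtheta_t^{s+1}-\wt{\vtheta}^s$) plus Young's inequality with parameter $\beta_t$ and Lemma~\ref{lemma:aux2}, the Lyapunov function $R_t^{s+1}$ whose $c_t$ recursion absorbs the $\norm[2]{\vtheta_t^{s+1}-\wt{\vtheta}^s}^2$ coefficient, and the constraints obtained by requiring $\Psi_t>0$. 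The bookkeeping concerns you flag are exactly the ones the paper's proof resolves, and your stated recursion coefficients and constants match those in the lemma.
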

\begin{proof}
	We have:
	\begin{align}
	\Ets{J(\vtheta_{t+1}^{s+1})} 
	&\geq \Ets{J(\vtheta_t^{s+1})+\dotprod{\gradJ{\vtheta_t^{s+1}}}{\vtheta_{t+1}^{s+1}-\vtheta_t^{s+1}} - \frac{L}{2}\norm[]{\vtheta_{t+1}^{s+1}-\vtheta_t^{s+1}}^2} \label{eq:11}\\
	&= \Ets{J(\vtheta_t^{s+1})+\alpha_t\dotprod{\gradJ{\vtheta_t^{s+1}}}{\gradBlack{\vtheta_t^{s+1}}} - \frac{\alpha_t^2L}{2}\norm[]{\gradBlack{\vtheta_t^{s+1}}}^2} \label{eq:12}\\
	&\geq
	\Ets{J(\vtheta_t^{s+1})+\alpha_t\norm[]{\gradJ{\vtheta_t^{s+1}}}^2 - \frac{\alpha_t^2L}{2}\norm[]{\gradBlack{\vtheta_t^{s+1}}}^2} \nonumber\\
	&\qquad-
	\frac{\alpha_t}{2N}\Vars{g(\cdot\vert\wt{\vtheta}^s)} -\frac{\alpha_t}{2}\Ets[t-1]{\norm[]{\gradJ{\vtheta_t^{s+1}}}^2}, \label{eq:13}
\end{align}
where~\eqref{eq:11} is from the L-smoothness of $J(\vtheta)$ \cite{nesterov2013introductory} and \eqref{eq:12} is from the SVRPG update.
Inequality~\ref{eq:13} follows from Lemma~\ref{lemma:aux0} by noticing that $\nabla J(\vtheta^{s+1}_t)$ is a deterministic function given $\vtheta^{s+1}_t$. As a consequence, we can directly apply Lemma~\ref{lemma:aux0} with $\vphi(\vtheta_t^{s+1})\coloneqq\gradJ{\vtheta_t^{s+1}}$.

Next, we have:
\begingroup
\allowdisplaybreaks
\begin{align}
        \mathbb{E}_{t|s}\bigg[ & \norm[]{\vtheta_{t+1}^{s+1}-\wt{\vtheta}^s}^2 \bigg]
= \Ets{\norm[]{\vtheta_{t+1}^{s+1}- \vtheta_t^{s+1} + \vtheta_t^{s+1}-\wt{\vtheta}^s}^2} \nonumber\\
&=\Ets{\norm[]{\vtheta_{t+1}^{s+1}-\vtheta_{t}^{s+1}}^2+\norm[]{\vtheta_t^{s+1}-\wt{\vtheta}^s}^2+2\dotprod{\vtheta_{t+1}^{s+1}-\vtheta_{t}^{s+1}}{\vtheta_t^{s+1}-\wt{\vtheta}^s}} \label{eq:14a} \\
&= \Ets{\alpha_t^2\norm[]{\gradBlack{\vtheta_t^{s+1}}}^2+\norm[]{\vtheta_t^{s+1}-\wt{\vtheta}^s}^2+2\alpha_t\dotprod{\gradBlack{\vtheta_t^{s+1}}}{\vtheta_t^{s+1}-\wt{\vtheta}^s}} \label{eq:14}\\
&\leq \Ets{\alpha_t^2\norm[]{\gradBlack{\vtheta_t^{s+1}}}^2+\norm[]{\vtheta_t^{s+1}-\wt{\vtheta}^s}^2+2\alpha_t\dotprod{\gradJ{\vtheta_t^{s+1}}}{\vtheta_t^{s+1}-\wt{\vtheta}^s}} \nonumber\\ 
&\qquad+
\frac{\alpha_t}{N}\Vars{g(\cdot\vert\wt{\vtheta}^s)} +\alpha_t\Ets[t-1]{\norm[]{\vtheta_t^{s+1}-\wt{\vtheta}^s}^2} \label{eq:15}\\
&\leq \Ets{\alpha_t^2\norm[]{\gradBlack{\vtheta_t^{s+1}}}^2+\norm[]{\vtheta_t^{s+1}-\wt{\vtheta}^s}^2}
+2\alpha_t\Ets[t-1]{\left|\dotprod{\gradJ{\vtheta_t^{s+1}}}{\vtheta_t^{s+1}-\wt{\vtheta}^s}\right|} \nonumber\\ 
&\qquad+
\frac{\alpha_t}{N}\Vars{g(\cdot\vert\wt{\vtheta}^s)} +\alpha_t\Ets[t-1]{\norm[]{\vtheta_t^{s+1}-\wt{\vtheta}^s}^2} \nonumber\\
&\leq \Ets{\alpha_t^2\norm[]{\gradBlack{\vtheta_t^{s+1}}}^2+\norm[]{\vtheta_t^{s+1}-\wt{\vtheta}^s}^2}
+2\alpha_t\Ets[t-1]{\norm[]{\gradJ{\vtheta_t^{s+1}}}\norm[]{\vtheta_t^{s+1}-\wt{\vtheta}^s}} \nonumber\\ 
&\qquad+
\frac{\alpha_t}{N}\Vars{g(\cdot\vert\wt{\vtheta}^s)} +\alpha_t\Ets[t-1]{\norm[]{\vtheta_t^{s+1}-\wt{\vtheta}^s}^2} \nonumber\\
&\leq \Ets{\alpha_t^2\norm[]{\gradBlack{\vtheta_t^{s+1}}}^2+\norm[]{\vtheta_t^{s+1}-\wt{\vtheta}^s}^2}
+2\alpha_t\Ets[t-1]{\frac{1}{2\beta_t}\norm[]{\gradJ{\vtheta_t^{s+1}}}^2+\frac{\beta_t}{2}\norm[]{\vtheta_t^{s+1}-\wt{\vtheta}^s}^2} \label{eq:16a}\\ 
&\qquad
+\frac{\alpha_t}{N}\Vars{g(\cdot\vert\wt{\vtheta}^s)} +\alpha_t\Ets[t-1]{\norm[]{\vtheta_t^{s+1}-\wt{\vtheta}^s}^2}, \label{eq:16}
\end{align}
\endgroup
where~\eqref{eq:14a} is obtained using the triangular inequality,~\eqref{eq:14} is from the SVRPG update,~\eqref{eq:15} is from Lemma~\ref{lemma:aux0} with $\vphi(\vtheta_t^{s+1})\coloneqq\vtheta_t^{s+1}-\tilde{\vtheta}^s$,~\eqref{eq:16a} is from Cauchy-Schwarz inequality, and~\eqref{eq:16} is from Young's inequality in the `Peter-Paul' variant.
Let us consider the following function:
\begin{equation}\label{E:lyapunov.function}
	R_{t+1}^{s+1} \coloneqq \Ets{J(\vtheta_{t+1}^{s+1}) - c_{t+1}\norm[]{\vtheta_{t+1}^{s+1}-\tilde{\vtheta}^s}^2}. 
\end{equation}
The objective is now to provide a lower bound to it.
\begingroup
\allowdisplaybreaks
\begin{align}
	R_{t+1}^{s+1} 
	&\geq	\Ets{J(\vtheta_t^{s+1}) - \frac{\alpha_t^2L}{2}\norm[]{\gradBlack{\vtheta_t^{s+1}}}^2}
    + \EVV[t-1|s]{\frac{\alpha_t}{2}\norm[]{\gradJ{\vtheta_t^{s+1}}}^2}
    \nonumber\\
	&\qquad-
	\frac{\alpha_t}{2N}\Vars{g(\cdot\vert\tilde{\vtheta}^s)}
	-c_{t+1}\Ets{\norm[]{\vtheta_{t+1}^{s+1}-\tilde{\vtheta}^s}^2} \label{eq:17}\\
	&\geq \Ets{J(\vtheta_t^{s+1}) - \frac{\alpha_t^2L}{2}\norm[]{\gradBlack{\vtheta_t^{s+1}}}^2} 
	+ \Ets[t-1]{\frac{\alpha_t}{2}\norm[]{\gradJ{\vtheta_t^{s+1}}}^2 }
	-\frac{\alpha_t}{2N}\Vars{g(\cdot\vert\tilde{\vtheta}^s)} \nonumber\\
	&\qquad -c_{t+1}\Ets{\alpha_t^2\norm[]{\gradBlack{\vtheta_t^{s+1}}}^2+\norm[]{\vtheta_t^{s+1}-\tilde{\vtheta}^s}^2}
	\nonumber\\
	&\qquad-2c_{t+1}\alpha_t\Ets[t-1]{\frac{1}{2\beta_t}\norm[]{\gradJ{\vtheta_t^{s+1}}}^2+\frac{\beta_t}{2}\norm[]{\vtheta_t^{s+1}-\tilde{\vtheta}^s}^2} \nonumber\\ 
	&\qquad
	-c_{t+1}\frac{\alpha_t}{N}\Vars{g(\cdot\vert\tilde{\vtheta}^s)} -c_{t+1}\alpha_t\Ets[t-1]{\norm[]{\vtheta_t^{s+1}-\tilde{\vtheta}^s}^2} \label{eq:18}\\
	&= \Ets[t-1]{J(\vtheta_t^{s+1})} - c_{t+1}\left(1+\alpha_t\beta_t+\alpha_t\right)\Ets[t-1]{\norm[]{\vtheta_{t}^{s+1}-\tilde{\vtheta}}^2} \nonumber\\
	&\qquad-\alpha_t^2\left(\frac{L}{2}+c_{t+1}\right)\Ets{\norm[]{\gradBlack{\vtheta_t^{s+1}}}^2}
	+\frac{\alpha_t}{2}\left(1-\frac{2c_{t+1}}{\beta_t}\right)\Ets[t-1]{\norm[]{\gradJ{\vtheta_t^{s+1}}}^2} \nonumber\\
	&\qquad-\frac{\alpha_t}{2N}\left(1+2c_{t+1}\right)\Vars{g(\cdot\vert\tilde{\vtheta}^s)} \nonumber\\
	&\geq  \Ets[t-1]{J(\vtheta_t^{s+1})} - c_{t+1}\left(1+\alpha_t\beta_t+\alpha_t\right)\Ets[t-1]{\norm[]{\vtheta_{t}^{s+1}-\tilde{\vtheta}}^2} \nonumber\\
	&\qquad
	-\alpha_t^2\left(\frac{L}{2}+c_{t+1}\right)\left(\Ets[t-1]{\norm[]{\gradJ{\vtheta_t^{s+1}}}^2} 
	+\frac{1}{N}\Vars{g(\cdot\vert\tilde{\vtheta}^s)}
	\right.\nonumber\\
	&\left.\qquad+\frac{L^2}{B}\Ets[t-1]{\norm[]{\vtheta_t^{s+1}-\tilde{\vtheta}^s}^2}
	+\frac{\Gamma W}{B}\right)
	+\frac{\alpha_t}{2}\left(1-2\frac{c_{t+1}}{\beta_t}\right)\Ets[t-1]{\norm[]{\gradJ{\vtheta_t^{s+1}}}^2} \nonumber\\
	&\qquad-\frac{\alpha_t}{2N}\left(1+2c_{t+1}\right)\Vars{g(\cdot\vert\tilde{\vtheta}^s)} \label{eq:19}\\
	& = \Ets[t-1]{J(\vtheta_t^{s+1}) - \left(c_{t+1}\left(1+\alpha_t\beta_t+\alpha_t+\frac{\alpha_t^2L^2}{B}\right)+\frac{\alpha_t^2L^3}{2B}\right)\norm[]{\vtheta_{t}^{s+1}-\tilde{\vtheta}}^2} \nonumber\\
	&\qquad
	+\alpha_t\left(\frac{1}{2}-\frac{c_{t+1}}{\beta_t}-\frac{\alpha_tL}{2}-\alpha_tc_{t+1}\right)\Ets[t-1]{\norm[]{\gradJ{\vtheta_t^{s+1}}}^2} \nonumber\\
	&\qquad-\frac{\alpha_t}{2N}\left(1+2c_{t+1}+\alpha_tL+2\alpha_tc_{t+1}\right)\Vars{g(\cdot\vert\tilde{\vtheta}^s)} 
    -\alpha_t^2\frac{(L+2c_{t+1})\Gamma\VARIS}{2B} \nonumber\\
	&= R_t^{s+1}
	+\Psi_t\Ets[t-1]{\norm[]{\gradJ{\vtheta_t^{s+1}}}^2}
	-\frac{d_t}{N}\Vars{g(\cdot\vert\tilde{\vtheta}^s)}
	-\frac{f_t}{B}\VARIS,\nonumber\\
	&\geq R_t^{s+1}
	+\Psi_t\Ets[t-1]{\norm[]{\gradJ{\vtheta_t^{s+1}}}^2}
	-\frac{d_t}{N}\VARRF
	-\frac{f_t}{B}\VARIS, \label{eq:20}
\end{align}
\endgroup
where~\eqref{eq:17} is from~\eqref{eq:13} noticing that $\EVV[t|s]{\norm[]{\nabla J(\vtheta^{s+1}_t)}^2} = \EVV[t-1|s]{\norm[]{\nabla J(\vtheta^{s+1}_t)}^2}$,~\eqref{eq:18} is from~\eqref{eq:16},~\eqref{eq:19} is from Lemma~\ref{lemma:aux2}, and (\ref{eq:20}) is from Assumption \ref{ass:REINFORCE}.
To complete the proof, besides rearranging terms, we have to ensure that $\Psi_t>0$ for each $t$. This gives the constraints on $\alpha_t$ and $\beta_t$.
\end{proof}

\subsection*{Main theorem}
We finally provide the proof of the convergence theorem:

\convergence*
\begin{proof}
We prove the theorem for the following values of the constants:
\begin{align*}
& \psi \coloneqq \min_t\{\Psi_t\}, 
& \zeta \coloneqq \frac{\max_t\{d_t\}V}{\psi}, 
&& \xi \coloneqq \frac{\max_t\{f_t\}W}{\psi},
\end{align*}
where $\Psi$, $d_t$ and $f_t$ are defined in Lemma~\ref{lemma:aux1}.
Starting from Lemma \ref{lemma:aux1}, summing over iterations of an epoch $s$ and using telescopic sum we obtain
\begin{align*}
\sum_{t=0}^{m-1}\Ets{\norm[]{\gradJ{(\vtheta_t^{s+1})}^2}}&\leq
 \frac{\sum_{t=0}^{m-1}\left(R_{t+1}^{s+1} - R_t^{s+1}\right)}{\psi} + \frac{m\zeta}{N} + \frac{m\xi}{B} \nonumber\\
 & = \frac{R^{s+1}_m - R^{s+1}_0}{\psi}  + \frac{m\zeta}{N} + \frac{m\xi}{B}
\end{align*}
By using the definition of $R^s_t$ in~\eqref{E:lyapunov.function}, the fact that $c_m = 0$ and $\vtheta^{s+1}_0 = \wt{\vtheta}^s = \vtheta^s_m$, we can state that:
\begin{align*}
        R^{s+1}_m - R^{s+1}_0 
        &= \EVV[m|s]{J(\vtheta^{s+1}_{m})- c_m \norm[]{\vtheta^{s+1}_{m} - \wt{\vtheta}^s}^2 } - \EVV[0|s]{J(\vtheta^{s+1}_{0}) - c_0 \norm[]{\vtheta^{s+1}_0 - \wt{\vtheta}^s}^2}\\
        &= \EVV[m|s]{J(\vtheta^{s+1}_{m})} - \EVV[0|s]{J(\wt{\vtheta}^s)}
        = \EVV[m|s]{J(\wt{\vtheta}^{s+1}) - J(\wt{\vtheta}^s)}
\end{align*}
Next, summing over epochs:
\begin{align}
\sum_{s=0}^{S-1}\sum_{t=0}^{m-1}\Ets{\norm[]{\gradJ{(\vtheta_t^{s+1})}^2}}&\leq
\frac{\sum_{s=0}^{S-1}\Ets[m]{J(\tilde{\vtheta}^{s+1}) - J(\tilde{\vtheta}^{s})}}{\psi} + \frac{T\zeta}{N} + \frac{T\xi}{B} \nonumber\\
&\leq
\frac{\EVV[]{J(\tilde{\vtheta}^{S}) - J(\tilde{\vtheta}^{0})}}{\psi} + \frac{T\zeta}{N} + \frac{T\xi}{B}
 \label{eq:23}\\
&\leq
\frac{J(\vtheta^*) - J(\vtheta^0)}{\psi} + \frac{T\zeta}{N} + \frac{T\xi}{B}, \label{eq:24}
\end{align}
where the expectation in~\eqref{eq:23} is \wrt all the trajectories sampled in a run of Algorithm~\ref{alg:svrpg} and~\eqref{eq:24} is from the definition of $\vtheta^*$ (\ie the policy performance maximizer).
Finally, we consider the expectation \wrt all sources of randomness, including the uniform sampling of the output parameter:
\begin{align*}
\EVV[]{\norm[]{\gradJ{(\vtheta_t^{s+1})}}^2} 
&=\frac{1}{T}\sum_{s=0}^{S-1}\sum_{t=0}^{m-1}\Ets{\norm[]{\gradJ{(\vtheta_t^{s+1})}}^2} 
\leq
\frac{J(\vtheta^*) - J(\vtheta^0)}{\psi T} + \frac{\zeta}{N} + \frac{\xi}{B}.
\end{align*}

\end{proof}

\section{Applicability to Gaussian Policies}\label{app:gauss}
We provide more details on the applicability of Theorem \ref{theo:convergence} on the case of Gaussian policies. We start from the case of one-dimensional bounded action space $\mathcal{A}\subset\mathbb{R}$, linear mean $\mu(s) = \vtheta^T\vphi(s)$ and fixed standard deviation $\sigma$:
\[
	\pi_{\vtheta}(a\vert s) = \frac{1}{\sqrt{2\pi}\sigma}\exp\left\{
		-\frac{(\vtheta^T\vphi(s) - a)^2}{2\sigma^2}\right\},
\]
where $\vphi(s)\leq M_{\phi}$ is a bounded feature vector, and we see under which conditions the three assumptions of Section \ref{sec:conv} hold.

\boundedscore*
For the Gaussian policy defined above, it's easy to show that:
\begin{align*}
	&\nabla_{\theta_i}\log\pi_{\vtheta}(\tau) =  \phi_i(s)\frac{a-\vtheta^T\phi(s)}{\sigma^2},\\
	&\frac{\partial^2}{\partial\theta_i\partial\theta_j}\log\pi_{\vtheta}(\tau) = \frac{\phi_i(s)\phi_j(s)}{\sigma^2}.
\end{align*}
Hence, Assumption \ref{ass:bounded_score} is automatically satisfied \footnote{This relies on the fact that $\vtheta^T\phi(s)$ lies in bounded $\Aspace$. In practice, this is usually enforced by clipping the action selected by $\pi_{\vtheta}$. A more rigorous way would be to employ the truncated Gaussian distribution.} by taking $\GRADLOG = \frac{M_{\phi}|\Aspace|}{\sigma^2}$ and $\HESSLOG = \frac{M_{\phi}^2}{\sigma^2}$.
\par
\varreinforce*
As mentioned, \cite{pirotta2013adaptive} provides a bound on the variance of the REINFORCE estimator, adapted from \cite{zhao2011analysis}, which does not depend on $\vtheta$:
\begin{align*}
\Var\left[\gradApp{\theta_i}{N}\right] \leq \frac{R^2M_{\phi}^2H(1-\gamma^H)^2}{N\sigma^2(1-\gamma)^2}.
\end{align*}
The same authors provide a similar bound for G(PO)MDP.

\varweights*
It is noted in \cite{cortes2010learning} that, for any two Gaussian distributions $\mathcal{N}(\mu_1,\sigma_1)$ and $\mathcal{N}(\mu_2,\sigma_2)$, the variance of the importance weights from the latter to the former is bounded whenever $\sigma_2 > \frac{\sqrt{2}}{2}\sigma_1$. This is automatically satisfied by our fixed-variance Gaussian policies, since $\sigma_2=\sigma_1=\sigma$.

We now briefly examine some generalizations of the simple Gaussian policy defined above that can be found in applications:

\paragraph{Multi-dimensional actions.}
When actions are multi-dimensional, factored Gaussian policies are typically employed, so the results extend trivially from the one-dimensional case. Actual multi-variate Gaussian distributions would require more calculations, but we do not expect substantially different results.

\paragraph{Non-linear mean.}
In complex continuous tasks, $\mu(s)$ often represents a deep neural network, or multi-layer perceptron, where $\vtheta$ are the weights of the network. The analysis of first and second order log-derivatives in such a scenario is beyond the scope of this paper.

\paragraph{Adaptive variance.}
It is a common practice to learn also the variance of the policy in order to adapt the degree of exploration. The variance (or diagonal covariance matrix in the multi-dimensional case) can be learned as a separate parameter or be state-dependent like the mean. In any case, adaptive variance must be carefully employed since it can clearly undermine all the three assumptions of Theorem \ref{theo:convergence}.

\section{Practical SVRPG Versions}\label{app:practicalsvrpg}
We provide more details on the practical variants of SVRPG.

\subsection{Adpative Step Size}
\begin{algorithm}[h]
	\begin{algorithmic}
		\STATE \textbf{Input:} A gradient estimate $g_t$ and parameters $\beta_1$, $\beta_2$, $\epsilon$ and $\alpha$.
		\STATE $\kappa_t = \beta_1 \kappa_{t-1} + (1 - \beta_1) g_t$
		\STATE $\nu_t = \beta_2 \nu_{t-1} + (1 - \beta_2) g_t \circ g_t$ ($\circ$ is the Hadamard (component-wise) product)
		\STATE $\hat{\kappa}_t = \dfrac{\kappa_t}{1 - \beta^t_1}$
		\STATE $\hat{\nu}_t = \dfrac{\nu_t}{1 - \beta^t_2}$
		\STATE $\Delta(g_t) = \dfrac{\alpha}{\sqrt{\hat{\nu}_t} + \epsilon} \hat{\kappa}_t$
		\STATE \textbf{Return:} The increment $\Delta(g_t)$  of the parameters.
	\end{algorithmic}
	\caption{
		\label{A:adam}
		Adam}
\end{algorithm}
Let us give a deeper insight on the two different learning rate schedules used by our algorithm. We report pseudo-code of the original ADAM \cite{kingma2014adam} in Algorithm \ref{A:adam}. As mentioned, we use two distinct instances of ADAM to manage different sources of variance: one related to the snapshots, and one to the sub-iterations. In this way the ADAM associated to the snapshots takes into account only the history of gradient moments at the snapshots. By using Algorithm \ref{A:adam} as a subroutine $\textbf{ADAM}(g,\alpha,\beta)$, we can explicitly define our gradient updates:

\begin{align*}
\vtheta^{s+1}_1 &= \wt{\vtheta}^s + \textbf{ADAM}\left(\wh{\nabla}_N J(\wt{\vtheta}^s),\beta_1,\beta_2,\alpha\right),\\
\vtheta^{s+1}_{t+1} &= \textbf{ADAM}\Big( 
\blacktriangledown J(\vtheta^{s+1}_t),\beta_1,\beta_2,\frac{\alpha}{2}\Big)
\text{ for $t=1,\dots,m-1$},
\end{align*}

where separate histories are kept for estimated first moments $\kappa_{FG},\kappa_{IS}$ and estimated second moments $\nu_{FG},\nu_{IS}$.
The meta-parameters $\alpha,\beta_1,\beta_2$ are constant and set to default values or with minor manual tuning (see table \ref{table:metaparams}). Note that we double the sub-iterations' learning rate for the snapshot ADAM since we can rely on a larger number of trajectories ($N$ instead of $B$) to control the variance. 

\subsection{Baseline}
The baseline used in the Half Cheetah experiment is the one used in \cite{duan2016benchmarking}. It is a linear state-value function estimator, or critic. 
The (time-varying) feature encoding for the linear baseline is:
\begin{align*}
\vphi(s,t)=[s, s \odot s, 0.01t, (0.01t)^2, (0.01t)^3,1],
\end{align*}
where $s\in\mathbb{R}^d$ is the state vector and $\odot$ is the element-wise product. The baseline is then:
\[
b(s_t,a_t) = \mathbf{\lambda}^T\vphi(s_t,t).
\]
The baseline is fitted from scratch at each policy gradient iteration, with least squares, to match state-value function $V^{\pi}(s)$.
When used with SVRPG, the critic parameter $\mathbf{\lambda}$ is updated only at the snapshot.


\section{Experimental Details}\label{app:exp}
We describe the RL tasks of Section \ref{sec:exp} in more detail:
\begin{enumerate}
	\item \emph{Cart-Pole Balancing} : an inverted pendulum mounted on a cart must be kept standing by moving the cart backward or forward ;4-dimensional state space: cart position x, pole angle $\theta$, cart velocity $\dot{x}$ and pole velocity $\dot{\theta}$; 1-dimensional action space: the horizontal force applied to the cart body. Reward function  is defined as $r(s, a) := 10 - (1 - cos(\theta)) - 10^{-5}\norm[] a^2$. The episodes terminate when $|x|>2.4$ or $|\theta|>0.2$ or the number of time steps T is greater than 100.
    \item \emph{Mujoco Swimmer}: a snake-like robot immersed in a fluid must move forward; 13-dimensional state space: 3 links velocities ($v_x$ and $v_y$ of center of masses) and 2 actuated joints angles. 2-dimensional action space: the two momentums applied on actuated joints.  The reward function is defined as $r(s, a) := v_x - 10^{-4}\norm[2]{a}^2$. The episodes terminate when the number of time steps T is greater than 500.
    \item \emph{Mujoco Half Cheetah}: a planar biped robot must move forward; 20-dimensional state space: 9 links and 6 actuated joints angles. 6-dimensional action space: the 6 momentums applied on actuated joints.  The reward function is defined as $r(s, a) := v_x - 0.05\norm[2]{a}^2$. The episodes terminate when the number of time steps T is greater than 500.
\end{enumerate}

All the parameters used in the experiments, including neural network architectures, are reported in the following table:

\begin{table}[H]\caption{Parameters used in the experiments of Section \ref{sec:exp}.}\label{table:metaparams}
	\centering
	\begin{tabular}{| l | c  c  c |}
		\hline	
		& Cart-Pole & Swimmer & Half Cheetah \\
		\hline
		NN hidden weights & 8 & 32x32 & 100x50x25 \\
		NN activation & tanh & tanh & tanh \\
		Adam $\alpha$ (SVRPG) & $5\cdotp10^{-2}$ & $10^{-3}$ & $10^{-3}$ \\
		Adam $\alpha$ (GPOMDP) & $10^{-2}$ & $10^{-3}$ & $10^{-2}$ \\
		Adam $\beta_1$ & 0.9 & 0.9 & 0.9 \\
		Adam $\beta_2$ & 0.99 & 0.99 & 0.99 \\ 
		Snapshot batch size $N$ (SVRPG) & 100 & 100 & 100 \\
		Mini-batch size $B$ (SVRPG) & 10 & 10 & 10 \\
		Batch size (GPOMDP) & 10 & 10 & 100 \\
		Max number of sub-iterations & 50 & 20 & 20 \\
		Task horizon& 100 & 500 & 500 \\
		Baseline& No & No & Yes \\
		Discount factor $\gamma$& 0.99 & 0.995 & 0.99 \\
		Total number of trajectories& 10000 & 20000 & 50000 \\
		\hline  
	\end{tabular}
\end{table}
Where not specified, meta-parameters are shared among G(PO)MDP and SVRPG.
Refer to \cite{duan2016benchmarking} for more details about G(PO)MDP (REINFORCE in the paper) on the Half Cheetah task.

\paragraph{Videos:} The supplementary materials include videos showing the behavior of the final SVRPG agents on the three considered tasks. 

\end{document}